\def\D{{\bf D}}
\def\K{{\bf K}}
\def\I{{\bf I}}
\def\X{{\bf X}}
\def\Y{{\bf Y}}
\def\q{{\bf q}}
\def\S{{\bf S}}
\def\x{{\bf x}}
\def\M{{\bf M}}
\def\0{{\bf 0}}
\def\1{{\bf 1}}
\def\RB{{\mathbb R}}
\def\diag{\mathsf{diag}}
\newtheorem{propositions}{Proposition}
\newtheorem{proof}{Proof}
\newcommand*{\colorboxed}{}
\def\colorboxed#1#{%
  \colorboxedAux{#1}%
}
\newcommand*{\colorboxedAux}[3]{%
  \begingroup
    \colorlet{cb@saved}{.}%
    \color#1{#2}%
    \boxed{%
      \color{cb@saved}%
      #3%
    }%
  \endgroup
}
\ifcvprfinal\pagestyle{empty}\fi
\begin{document}

\title{Recurrent Pixel Embedding for Instance Grouping}

\author{
Shu Kong, \ \   Charless Fowlkes\\
  Department of Computer Science\\
  University of California, Irvine\\
  Irvine, CA 92697, USA \\
  \texttt{\{skong2, fowlkes\}@ics.uci.edu } \\ \\
  \ [\href{http://www.ics.uci.edu/~skong2/SMMMSG.html}{Project Page}],
[\href{https://github.com/aimerykong/Recurrent-Pixel-Embedding-for-Instance-Grouping}{
Github}],
[\href{http://www.ics.uci.edu/~skong2/slides/pixel_embedding_for_grouping_public_version.pdf}{Slides}],
[\href{http://www.ics.uci.edu/~skong2/slides/pixel_embedding_for_grouping_poster.pdf}{Poster}]
}

\maketitle

\begin{abstract}
We introduce a differentiable, end-to-end trainable framework for solving
pixel-level grouping problems such as instance segmentation consisting of two
novel components. First, we regress pixels into a hyper-spherical embedding
space so that pixels from the same group have high cosine similarity while
those from different groups have similarity below a specified margin.  We
analyze the choice of embedding dimension and margin, relating them to
theoretical results on the problem of distributing points uniformly on the
sphere. Second, to group instances, we utilize a variant of mean-shift
clustering, implemented as a recurrent neural network parameterized by kernel
bandwidth.  This recurrent grouping module is differentiable, enjoys convergent
dynamics and probabilistic interpretability.  Backpropagating the
group-weighted loss through this module allows learning to focus on only
correcting embedding errors that won't be resolved during subsequent
clustering.  Our framework, while conceptually simple and theoretically
abundant, is also practically effective and computationally efficient. We
demonstrate substantial improvements over state-of-the-art instance
segmentation for object proposal generation, as well as demonstrating
the benefits of grouping loss for classification tasks such as boundary
detection and semantic segmentation.


\end{abstract}


\section{Introduction}

\begin{figure}[t]
\centering
   \includegraphics[width=0.995\linewidth]{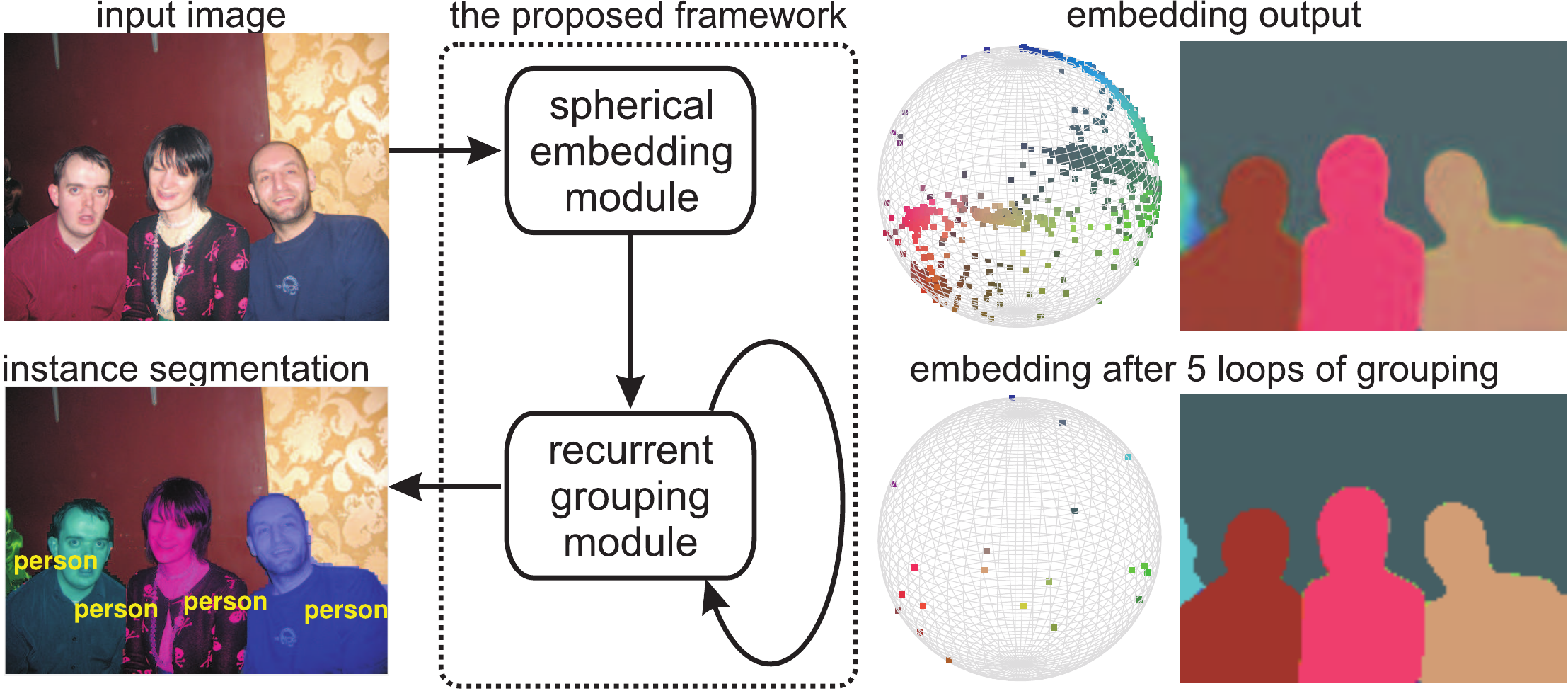}
   \vspace{-3.5mm}
   \caption{
   \small Our framework embeds pixels into a hyper-sphere where
   recurrent mean-shift dynamics groups pixels
   into a variable number of object instances.  Here we visualize random
   projections of a 64-dim embeddings into 3-dimensions.
   }
   \vspace{-5.mm}
\label{fig:splash_figure}
\end{figure}

The successes of deep convolutional neural nets (CNNs) at image classification
has spawned a flurry of work in computer vision on adapting these models to
pixel-level image understanding tasks, such as boundary detection
\cite{arbelaez2011contour, xie2015holistically, maninis2017convolutional},
semantic segmentation \cite{long2015fully,chen2016deeplab,kong2017recurrent},
optical flow \cite{weinzaepfel2013deepflow, dosovitskiy2015flownet}, and pose
estimation \cite{wei2016cpm, cao2017realtime}.  The key ideas that have enabled
this adaption thus far are: (1) deconvolution schemes that allow for upsampling
coarse pooled feature maps to make detailed predictions at the spatial
resolution of individual pixels~\cite{xie2015holistically,ghiasi2016laplacian},
(2) skip connections and hyper-columns which concatenate representations across
multi-resolution feature maps~\cite{hariharan2015hypercolumns,chen2016deeplab},
(3) atrous convolution which allows efficient computation with large receptive
fields while maintaining spatial resolution~\cite{chen2016deeplab,
kong2017recurrent}, and (4) fully convolutional operation which handles
variable sized input images.

In contrast, there has been less innovation in the development of specialized
loss functions for training. Pixel-level labeling tasks fall into the category
of structured output prediction~\cite{bakir2007predicting}, where the model
outputs a structured object (e.g., a whole image parse) rather than a scalar or
categorical variable.  However, most CNN pixel-labeling architectures are
simply trained with loss functions that decompose into a simple (weighted) sum
of classification or regression losses over individual pixel labels.

The need to address the output space structure is more apparent when
considering problems where the set of output labels isn't fixed.  Our
motivating example is object instance segmentation, where the model generates a
collection of segments corresponding to object instances.  This problem
can't be treated as k-way classification since the number of objects isn't
known in advance.  Further, the loss should be invariant to permutations of
the instance labels within the same semantic category.

As a result, most recent successful approaches to instance segmentation have
adopted more heuristic approaches that first use an object detector to
enumerate candidate instances and then perform pixel-level segmentation of each
instance~\cite{liang2015proposal, dai2016instance, li2016fully,
liang2016reversible, arnab2017pixelwise}.  Alternately one can generate generic
proposal segments and then label each one with a semantic
detector~\cite{hariharan2014simultaneous, chen2015multi,
hariharan2015hypercolumns, dai2015convolutional, uhrig2016pixel, he2017mask}.
In either case the detection and segmentation steps can both be mapped to
standard binary classification losses.  While effective, these approaches are
somewhat unsatisfying since: (1) they rely on the object detector and
non-maximum suppression heuristics to accurately ``count'' the number of
instances, (2) they are difficult to train in an end-to-end manner since the
interface between instance segmentation and detection is non-differentiable,
and (3) they underperform in cluttered scenes as the assignment of pixels to
detections is carried out independently for each detection\footnote{This is
less a problem for object proposals that are jointly estimated by bottom-up
segmentation (e.g., MCG~\cite{pont2017multiscale} and COB
\cite{maninis2017convolutional}).  However, such generic proposal generation is
not informed by the top-down semantics.}.

Here we propose to directly tackle the instance grouping problem in a unified
architecture by training a model that labels pixels with unit-length vectors
that live in some fixed-dimension embedding space
(Fig.~\ref{fig:splash_figure}).  Unlike k-way classification where the target
vectors for each pixel are specified in advance (i.e., one-hot vectors at the
vertices of a k-1 dimensional simplex) we allow each instance to be labeled
with an arbitrary embedding vector on the sphere.  Our loss function simply
enforces the constraint that the embedding vectors used to label different
instances are far apart.  Since neither the number of labels, nor the target
label vectors are specified in advance, we can't use standard soft-max
thresholding to produce a discrete labeling.  Instead, we utilize a
variant of mean-shift clustering which can be viewed as a recurrent network
whose fixed point identifies a small, discrete set of instance label vectors
and concurrently labels each pixel with one of the vectors from this set.

This framework is largely agnostic to the underlying CNN architecture and can
be applied to a range of low, mid and high level visual tasks. Specifically,
we carry out experiments showing how this method can be used for boundary
detection, object proposal generation and semantic instance segmentation. Even
when a task can be modeled by a binary pixel classification loss (e.g.,
boundary detection) we find that the grouping loss guides the model towards
higher-quality feature representations that yield superior performance to
classification loss alone. The model really shines for instance segmentation,
where we demonstrate a substantial boost in object proposal generation
(improving the state-of-the-art average recall for 10 proposals per image from
0.56 to 0.77).  To summarize our contributions: (1) we introduce a simple,
easily interpreted end-to-end model for pixel-level instance labeling which is
widely applicable and highly effective, (2) we provide theoretical analysis
that offers guidelines on setting hyperparameters, and (3) benchmark results
show substantial improvements over existing approaches.

\section{Related Work}

Common approaches to instance segmentation first generate region proposals or
class-agnostic bounding boxes, segment the foreground objects within each
proposal and classify the objects in the bounding box~\cite{yang2012layered,
ladicky2010and, hariharan2014simultaneous, chen2015multi, dai2016instance,
liang2016reversible, he2017mask}.  \cite{li2016fully} introduce a fully
convolutional approach that includes bounding box proposal generation in
end-to-end training.  Recently, ``box-free''
methods~\cite{pinheiro2015learning, pinheiro2016learning, liang2015proposal,
hu2016fastmask} avoid some limitations of box proposals (e.g. for wiry or
articulated objects). They commonly use Faster RCNN~\cite{ren2015faster} to
produce ``centeredness'' score on each pixel and then predict binary instance
masks and class labels.  Other approaches have been explored for modeling joint
segmentation and instance labeling jointly in a combinatorial framework (e.g.,
\cite{kirillov2016instancecut}) but typically don't address end-to-end
learning.  Alternately, recurrent models that sequentially produce a list of
instances~\cite{romera2016recurrent,renend} offer another approach to address
variable sized output structures in a unified manner.

The most closely related to ours is the associative embedding work of
\cite{newell2016associative}, which demonstrated strong results for grouping
multi-person keypoints, and unpublished work from \cite{fathi2017semantic} on
metric learning for instance segmentation.  Our approach extends on these ideas
substantially by integrating recurrent mean-shift to directly generate the
final instances (rather than heuristic decoding or thresholding distance to
seed proposals). There is also an important and interesting connection to work
that has used embedding to separate instances where the embedding is directly
learned using a supervised regression loss rather than a pairwise associative
loss.  \cite{sironi2014multiscale} train a regressor that predicts the distance
to the contour centerline for boundary detection, while \cite{bai2016deep}
predict the distance transform of the instance masks which is then
post-processed with watershed transform to generate segments.
\cite{uhrig2016pixel} predict an embedding based on scene depth and direction
towards the instance center (like Hough voting).

Finally, we note that these ideas are related to work on using embedding for
solving pairwise clustering problems. For example, normalized cuts clusters
embedding vectors given by the eigenvectors of the normalized graph
Laplacian~\cite{shi2000normalized} and the spatial gradient of these embedding
vectors was used in~\cite{arbelaez2011contour} as a feature for boundary
detection. Rather than learning pairwise similarity from data and then
embedding prior to clustering (e.g., \cite{maire2016affinity}), we use a
pairwise loss but learn the embedding directly. Our recurrent mean-shift
grouping is reminiscent of other efforts that use unrolled implementations of
iterative algorithms such as CRF inference~\cite{zheng2015conditional} or
bilateral filtering~\cite{jampani2016learning, gadde2015superpixel}. Unlike
general RNNs~\cite{bengio1994learning, pascanu2013difficulty} which are often
difficult to train, our recurrent model has fixed parameters that assure
interpretable convergent dynamics and meaningful gradients during learning.

%
%
%
%
%
%
%
%
%

\section{Pairwise Loss for Pixel Embeddings}
\label{sec:max-margin}
In this section we introduce and analyze the loss we use for learning pixel
embeddings. This problem is broadly related to supervised distance metric
learning~\cite{weinberger2009distance,kong2012dictionary, kong2013learning} and clustering~\cite{kong2012multi}
but adapted to the specifics of instance labeling where
the embedding vectors are treated as labels for a variable number of objects in
each image.

Our goal is to learn a mapping from an input image to a set of $D$-dimensional
embedding vectors (one for each pixel).  Let $\x_i,\x_j \in {\mathbb R}^D$ be the
embeddings of pixels $i$ and $j$ respectively with corresponding labels $y_i$
and $y_j$ that denote ground-truth instance-level semantic labels (e.g., {\em
car.1} and {\em car.2}). We will measure the similarity of the embedding vectors
using the cosine similarity, been scaled and offset to lie in the interval
$[0,1]$ for notational convenience:
\begin{equation}
\begin{split} \small
s_{ij} = \frac{1}{2}\left(1 + \frac{\x_i^T\x_j}{\Vert\x_i\Vert_2 \Vert\x_j\Vert_2}\right)
\end{split}
\label{eq:calibrated_cosine_similarity}
\end{equation}
In the discussion that follows we think of the similarity in terms of the inner
product between the projected embedding vectors (e.g., $\frac{x_i}{\|x_i\|}$)
which live on the surface of a $(D-1)$ dimensional sphere.  Other common
similarity metrics utilize Euclidean distance with a squared exponential kernel
or sigmoid function~\cite{newell2016associative, fathi2017semantic}. We prefer
the cosine metric since it is invariant to the scale of the embedding vectors,
decoupling the loss from model design choices such as weight decay or
regularization that limit the dynamic range of Euclidean distances.

Our goal is to learn an embedding so that pixels with the same label (positive
pairs with $y_i = y_j$) have the same embedding (i.e. $s_{ij}=1$). To avoid a
trivial solution where all the embedding vectors are the same, we impose the
additional constraint that pairs from different instances (negative pairs with
$y_i \neq y_j$) are placed far apart. To provide additional flexibility, we
include a weight $w_i$ in the definition of the loss which specifies the
importance of a given pixel. The total loss over all pairs and training images
is:
\begin{equation}
\small
\begin{split}
\ell = \sum_{k=1}^M  \sum_{i,j=1}^{N_k} \frac{w^k_i w^k_j}{N_k} \Big( \1_{\{y_i=y_j\}}(1-s_{ij})
 &+ \1_{\{y_i\not=y_j\}} [s_{ij}-\alpha]_{+} \Big)
\end{split}
\label{eq:obj}
\end{equation}
where $N_k$ is the number of pixels in the $k$-th image ($M$ images in
total), and $w^k_i$ is the pixel pair weight associated with pixel
$i$ in image $k$.  The hyper-parameter $\alpha$ controls the maximum margin
for negative pairs of pixels, incurring a penalty if the embeddings for pixels
belonging to the same group have an angular separation of less than
$\cos^{-1}(\alpha)$.  Positive pairs pay a penalty if they have a similarity
less than $1$.  Fig.~\ref{fig:lossFunc} shows a graph of the loss function.
\cite{wu2017sampling} argue that the constant slope of the margin loss is more
robust, e.g., than squared loss.

\begin{figure}[t]
\vspace{-1mm}
\raisebox{-\height}{
\includegraphics[width=0.20\textwidth]{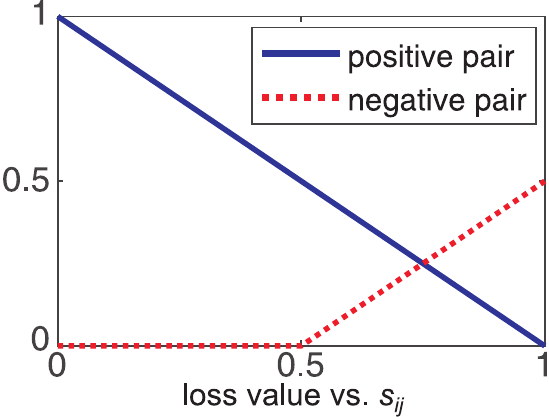}}\hfill%
\begin{minipage}[t]{0.23\textwidth} 
\caption{\small
Loss as a function of calibrated similarity score Eq.~\ref{eq:calibrated_cosine_similarity}
with $\alpha=0.5$. The
gradient is constant, limiting the effect of noisy ground-truth labels
 (i.e., near an object boundary)}
\label{fig:lossFunc}
\end{minipage}
\vspace{-4mm}
\end{figure}

We carry out a simple theoretical analysis which provides a guide for setting
the weights $w_i$ and margin hyperparameter $\alpha$ in the loss function. Proofs
can be found in the appendix.

\subsection{Instance-aware Pixel Weighting}
We first examine the role of embedding dimension and instance size on the
training loss.
\begin{propositions}
\label{theorem:ObjLowerBound}
For $n$ vectors $\{\x_1, \dots, \x_n\}$, the total intra-pixel similarity is
bounded as $\sum_{i\neq j} \x_i^T\x_j \ge -\sum_{i=1}^n  \Vert \x_i \Vert_2^2$.
In particular, for $n$ vectors on the hypersphere where $\Vert\x_i\Vert_2 = 1$,
we have $\sum_{i\neq j} \x_i^T\x_j \ge -n$.
\end{propositions}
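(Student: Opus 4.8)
The plan is to exploit the single fact that the squared Euclidean norm of any vector is nonnegative, applied to the running sum $\s = \sum_{i=1}^n \x_i$. Concretely, I would start from the trivial inequality $0 \le \Vert \s \Vert_2^2$ and expand the inner product, $\s^T\s = \big(\sum_i \x_i\big)^T\big(\sum_j \x_j\big) = \sum_{i,j=1}^n \x_i^T\x_j$.

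Next I would split this double sum into its diagonal and off-diagonal parts, $\sum_{i,j=1}^n \x_i^T\x_j = \sum_{i=1}^n \Vert\x_i\Vert_2^2 + \sum_{i\neq j}\x_i^T\x_j$, and substitute back into $0 \le \Vert\s\Vert_2^2$. Rearranging to isolate the cross-term sum yields exactly $\sum_{i\neq j}\x_i^T\x_j \ge -\sum_{i=1}^n\Vert\x_i\Vert_2^2$, which is the first claim.

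For the second claim I would simply specialize: when every $\x_i$ lies on the unit hypersphere we have $\Vert\x_i\Vert_2 = 1$, so $\sum_{i=1}^n\Vert\x_i\Vert_2^2 = n$ and the bound collapses to $\sum_{i\neq j}\x_i^T\x_j \ge -n$.

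There is essentially no obstacle here: the argument is one line once the right quantity, $\Vert\sum_i\x_i\Vert_2^2$, is identified, and the rest is routine bookkeeping of the diagonal terms in the expansion of the double sum. If one wants to comment further, it is worth noting that equality holds precisely when $\sum_i\x_i = \0$ (e.g.\ antipodal pairs, or the vertices of a regular simplex), which is exactly why this lower bound is the natural target the embedding loss drives toward.
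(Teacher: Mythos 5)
Your proof is correct and follows exactly the same route as the paper's: expand $\Vert\sum_i \x_i\Vert_2^2 \ge 0$ into diagonal and cross terms and rearrange, then specialize to unit-norm vectors. The added remark that equality holds precisely when $\sum_i \x_i = \0$ is a correct and worthwhile observation beyond what the paper states.
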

This proposition indicates that the total cosine similarity (and hence the loss)
for a set of embedding vectors has a constant lower bound that does not depend
on the dimension of the embedding space (a feature lacking in Euclidean
embeddings). In particular, this type of analysis suggests a natural choice of
pixel weighting $w_i$. Suppose a training example contains $Q$ instances and
${\cal I}_q$ denotes the set of pixels belonging to a particular ground-truth
instance $q$.  We can write
\begin{equation}
\small
\begin{split}
\| \sum_{q=1}^Q \sum_{i \in {\cal I}_q} w_i \x_i \|^2 = &
\sum_{q=1}^Q \| \sum_{i \in {\cal I}_q} w_i \x_i \|^2 + \\
&\sum_{p \neq q} \big(\sum_{i \in {\cal I}_p} w_i \x_i \big)^T \big(\sum_{j \in {\cal I}_q} w_j \x_j \big)
\end{split}
\nonumber
\end{equation}
where the first term on the r.h.s. corresponds to contributions to the loss
function for positive pairs while the second corresponds to contributions
from negative pairs. Setting $w_i = \frac{1}{\vert {\cal I}_q\vert}$ for pixels $i$
belonging to ground-truth instance $q$ assures that each instance contributes
equally to the loss independent of size. Furthermore, when the embedding
dimension $D \geq Q$, we can simply embed the data so that the instance means
$ \mu_k = \frac{1}{\vert {\cal I}_q\vert} \sum_{i \in {\cal I}_q} \x_i $
are along orthogonal axes on the sphere. This zeros out the second term on
the r.h.s., leaving only the first term which is bounded
$
0 \leq \sum_{q=1}^Q \left\| \frac{1}{\vert {\cal I}_q \vert} \sum_{i \in {\cal I}_q} \x_i \right\|^2 \leq Q$,
and translates to corresponding upper and lower bounds on the loss
that are independent of the number of pixels and embedding dimension
(so long as $D \geq Q$).

Pairwise weighting schemes have been shown important empirically
\cite{fathi2017semantic} and class imbalance can have a substantial effect on
the performance of different architectures (see e.g., \cite{lin2017focal}).
While other work has advocated online bootstrapping methods for hard-pixel
mining or mini-batch selection ~\cite{loshchilov2015online, kong2016photo,
shrivastava2016training,wu2016bridging},
our approach is much simpler. Guided by
this result we simply use uniform random sampling of pixels during training,
appropriately weighted by instance size in order to estimate the loss.

\subsection{Margin Selection}
To analyze the appropriate margin, let's first consider the problem of
distributing labels for different instances as far apart as possible on a 3D
sphere, sometimes referred to as Tammes's problem, or the hard-spheres
problem~\cite{saff1997distributing}. This can be formalized as maximizing
the smallest distance among $n$ points on a sphere: $\max\limits_{\x_i\in
\RB^3} \min\limits_{i\not=j}\Vert \x_i-\x_j\Vert_2$.  Asymptotic results
in~\cite{habicht1951lagerung} provide the following proposition (see proof in the
appendix):
\begin{propositions}
\label{lemma:max_margin}
Given $N$ vectors $\{\x_1, \dots, \x_n\}$ on a 2-sphere, i.e. $\x_i \in \RB^3$,
$\Vert\x_i\Vert_2 = 1, \forall i=1\dots n$, choosing $\alpha \leq 1- \Big(
\frac{2\pi}{\sqrt{3}N} \Big)$, guarantees that $[s_{ij}-\alpha]_{+}\geq 0$ for
some pair $i \not= j$. Choosing $\alpha > 1- \frac{1}{4} \Bigg(
\Big(\frac{8\pi}{\sqrt{3}N}\Big)^{\frac{1}{2}} -CN^{-\frac{2}{3}}\Bigg)^2$,
guarantees the existence of an embedding with $[s_{ij}-\alpha]_{+}=0$ for all
pairs $i \not= j$.
\end{propositions}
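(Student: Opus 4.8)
The plan is to reduce the statement to the classical Tammes (hard-spheres) packing problem on the unit sphere $S^2\subset\RB^3$ and then invoke its known sharp asymptotics. First I would record the elementary dictionary between the calibrated cosine similarity and chordal distance: for unit vectors $\x_i,\x_j$ one has $\Vert\x_i-\x_j\Vert_2^2 = 2(1-\x_i^T\x_j) = 4(1-s_{ij})$, so $s_{ij}\le\alpha$ is equivalent to $\Vert\x_i-\x_j\Vert_2\ge 2\sqrt{1-\alpha}$, and, taking the extremal (closest) pair,
\[
\max_{i\neq j}s_{ij}\ge\alpha \;\iff\; \min_{i\neq j}\Vert\x_i-\x_j\Vert_2\le 2\sqrt{1-\alpha}.
\]
Consequently ``$[s_{ij}-\alpha]_{+}=0$ for all $i\neq j$'' is exactly the assertion that the configuration $\{\x_i\}$ attains pairwise chordal separation at least $2\sqrt{1-\alpha}$, while ``$[s_{ij}-\alpha]_{+}$ cannot be zero on all pairs'' is the assertion that some pair is closer than $2\sqrt{1-\alpha}$.

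Next I would introduce the optimal Tammes separation $d_N := \max_{\{\x_i\}\subset S^2}\min_{i\neq j}\Vert\x_i-\x_j\Vert_2$ and cite the asymptotic estimates behind \cite{habicht1951lagerung} (together with the Fejes T\'oth spherical packing bound): for $N$ large,
\[
\sqrt{\tfrac{8\pi}{\sqrt3\,N}}\;-\;C\,N^{-2/3} \;\le\; d_N \;\le\; \sqrt{\tfrac{8\pi}{\sqrt3\,N}},
\]
where the upper bound encodes that no packing of the points' geodesic cells on $S^2$ can asymptotically beat the planar hexagonal density $\pi/(2\sqrt3)$, and $C$ is a universal constant. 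With the dictionary in hand, both claims become a one-line comparison of $2\sqrt{1-\alpha}$ against $d_N$.

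For the first claim, \emph{any} configuration of $N$ unit vectors has its closest pair at distance $\min_{i\neq j}\Vert\x_i-\x_j\Vert_2\le d_N\le\sqrt{8\pi/(\sqrt3 N)}$, hence $\max_{i\neq j}s_{ij}\ge 1-\tfrac{2\pi}{\sqrt3 N}$; so if $\alpha\le 1-\tfrac{2\pi}{\sqrt3 N}$ some pair $i\neq j$ satisfies $s_{ij}\ge\alpha$ and the negative-pair penalty is active. For the second claim, take a configuration realizing $\min_{i\neq j}\Vert\x_i-\x_j\Vert_2=d_N\ge\sqrt{8\pi/(\sqrt3 N)}-CN^{-2/3}$; the hypothesis $\alpha> 1-\tfrac14\big(\sqrt{8\pi/(\sqrt3 N)}-CN^{-2/3}\big)^2$ rearranges (taking positive roots, valid for $N$ large) to $2\sqrt{1-\alpha}<\sqrt{8\pi/(\sqrt3 N)}-CN^{-2/3}\le d_N$, so every pair of this configuration has $\Vert\x_i-\x_j\Vert_2> 2\sqrt{1-\alpha}$, i.e. $s_{ij}<\alpha$, i.e. $[s_{ij}-\alpha]_{+}=0$ for all $i\neq j$.

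The only substantive ingredient beyond bookkeeping is the appeal to the sharp two-sided Tammes asymptotics, so the real ``obstacle'' is invoking them correctly; the constant $C$ and the implicit ``$N$ sufficiently large'' proviso are inherited from \cite{habicht1951lagerung}. Two small points I would flag in the write-up: (i) the phrase ``$[s_{ij}-\alpha]_{+}\ge 0$ for some pair'' is literally vacuous, and the intended (and proved) assertion is that the margin penalty \emph{cannot be driven to zero on every negative pair}; and (ii) the strict inequality in the second hypothesis should be kept strict precisely so that $2\sqrt{1-\alpha}<d_N$ is strict, which is what makes the margin genuinely inactive at the constructed embedding.
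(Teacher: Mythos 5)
Your proposal follows essentially the same route as the paper's own proof: both convert the calibrated similarity to chordal distance via $\Vert\x_i-\x_j\Vert_2^2=4(1-s_{ij})$, reduce to Tammes's problem, and invoke the identical two-sided asymptotic bounds of \cite{habicht1951lagerung} on the optimal separation $d_N$ to obtain the two threshold values of $\alpha$. Your side remark that the literal phrase ``$[s_{ij}-\alpha]_{+}\geq 0$ for some pair'' is vacuous is well taken --- the intended (and proved) content is that the negative-pair penalty cannot vanish on every pair, which is how the appendix restates it.
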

Proposition~\ref{lemma:max_margin} gives the maximum margin for a separation of $n$
groups of pixels in a three dimensional embedding space (sphere).  For example,
if an image has at most $\{4,5,6,7\}$ instances, $\alpha$ can be set as small
as $\{0.093, 0.274, 0.395, 0.482\}$, respectively.

For points in a higher dimension embedding space, it is a non-trivial problem to
establish a tight analytic bound for the margin $\alpha$. Despite its simple
description, distributing $n$ points on a $(D-1)$-dimensional hypersphere is
considered a serious mathematical challenge for which there is no general
solutions~\cite{saff1997distributing, lovisolo2001uniform}. We adopt a safe
(trivial) strategy. For $n$ instances embedded in $n/2$ dimensions one can use
value of $\alpha=0.5$ which allows for zero loss by placing a pair of groups
antipodally along each of the $n/2$ orthogonal axes. We adopt this setting for
the majority of experiments in the paper where the embedding dimension is set
to $64$.

\section{Recurrent Mean-Shift Grouping}
\label{sec:meanShiftGrouping}

\begin{figure}[t]
\centering
   \includegraphics[width=1\linewidth]{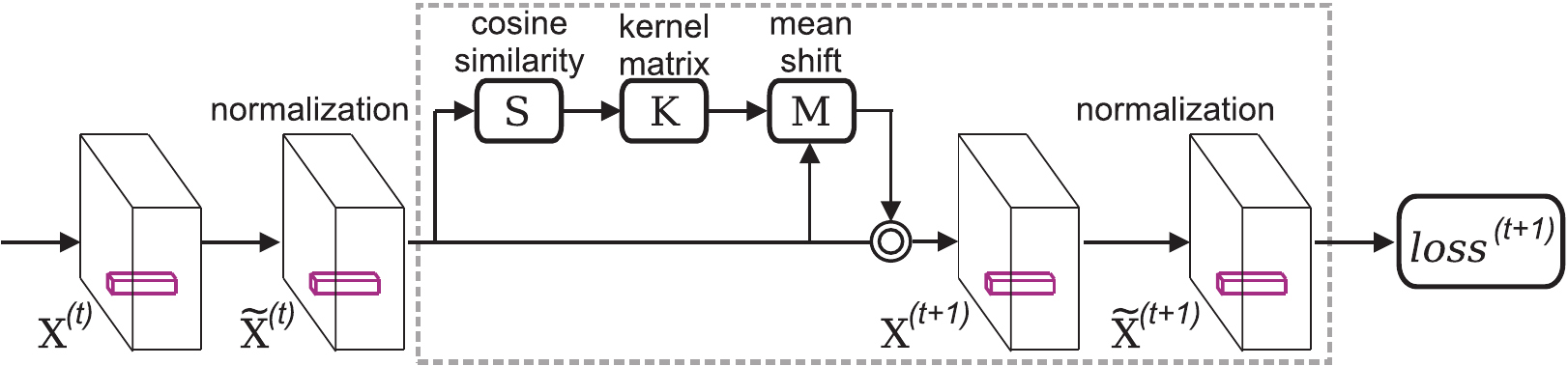} 
   \vspace{-6mm}
   \caption{Recurrent mean shift grouping module is unrolled during training.
   }
   \vspace{-3mm}
\label{fig:meanShiftGroupingModule}
\end{figure}

While we can directly train a model to predict embeddings as
described in the previous section, it is not clear how to generate the final
instance segmentation from the resulting (imperfect) embeddings. One can
utilize heuristic post-processing~\cite{de2017semantic} or utilize clustering
algorithms that estimate the number of instances~\cite{liang2015proposal},
but
these are not differentiable and thus unsatisfying. Instead, we introduce a
mean-shift grouping model (Fig.~\ref{fig:meanShiftGroupingModule})
which operates recurrently on the embedding space in order to congeal
the embedding vectors into a small number of instance labels.

Mean-shift and closely related algorithms~\cite{fukunaga1975estimation,
cheng1995mean, comaniciu1999mean, comaniciu2002mean} use kernel density
estimation to approximate the probability density from a set of samples and
then perform clustering on the input data by assigning or moving each sample to
the nearest mode (local maxima). From our perspective, the advantages of this
approach are (1) the final instance labels (modes) live in the same embedding
space as the initial data, (2) the recurrent dynamics of the clustering process
depend smoothing on the input allowing for easy backpropagation, (3) the
behavior depends on a single parameter, the kernel bandwidth, which is easily
interpretable and can be related to the margin used for the embedding loss.

\subsection{Mean Shift Clustering}
A common choice for non-parametric density estimation is to use the isotropic
multivariate normal kernel $K(\x, \x_i)= (2\pi)^{-D/2}\exp \Big(-
\frac{\delta^2}{2} \Vert\x - \x_i \Vert^2_2 \Big)$ and approximate the data
density non-parametrically as $p(x) = \frac{1}{N} \sum K(x,x_i)$. Since our
embedding vectors are unit norm, we instead use the von Mises-Fisher
distribution which is the natural extension of the multivariate normal to the
hypersphere~\cite{fisher1953dispersion,
banerjee2005clustering,mardia2009directional,kobayashi2010mises},
and is given by $K(\x,\x_i) \propto \exp( \delta \x^T \x_i )$.
The kernel bandwidth, $\delta$ determines the smoothness of the
kernel density estimate and is closely related to the margin used for learning
the embedding space. While it is straightforward to learn $\delta$ during
training, we instead set it to satisfy $\frac{1}{\delta}=\frac{1-\alpha}{3}$
throughout our experiments, such that the cluster separation (margin) in the
learned embedding space is three standard deviations.

We formulate the mean shift algorithm in a matrix form.  Let $\X \in
\RB^{D\times N}$ denote the stacked $N$ pixel embedding vectors of an image.
The kernel matrix is given by $\K = \exp(\delta \X^T\X) \in \RB^{N \times N}$.
Let $\D = \diag(\K^T\1)$ denote the diagonal matrix of total affinities,
referred to as the degree when $\K$ is viewed as a weighted graph
adjacency matrix. At each iteration, we compute the mean shift $\M =
\X\K\D^{-1} - \X$, which is the difference vector between $\X$ and the kernel
weighted average of $\X$. We then modify the embedding vectors by moving
them in the mean shift direction with step size $\eta$:
\begin{equation}
\begin{split}
\X \leftarrow & \X + \eta(\X\K\D^{-1} - \X)\\
\leftarrow & \X (\eta\K\D^{-1} + (1-\eta)\I)\\
\end{split}
\end{equation}
Note that unlike standard mean-shift mode finding, we recompute $\K$ at each
iteration.  These update dynamics are termed the explicit-$\eta$ method and
were analyzed by~\cite{carreira2008generalised}. When $\eta=1$ and the kernel
is Gaussian, this is also referred to as Gaussian Blurring Mean Shift (GBMS)
and has been shown to have cubic convergence~\cite{carreira2008generalised}
under appropriate conditions.  Unlike deep RNNs, the parameters of our
recurrent module are not learned and the forward dynamics are convergent under
general conditions.  In practice, we do not observe issues with exploding or
vanishing gradients during back-propagation through a finite number of
iterations~\footnote{Some intuition about stability may be gained by noting
that the eigenvalues of $\K\D^{-1}$ lie in the interval $[0,1]$, but we have
not been able to prove useful corresponding bounds on the spectrum of the
Jacobian.}.

Fig.~\ref{fig:mnist_demo} demonstrates a toy example of applying the method to
perform digit instance segmentation on synthetic images from
MNIST~\cite{lecun1998gradient}. We learn 3-dimensional embedding in order to
visualize the results before and after the mean shift grouping module.  From
the figure, we can see the mean shift grouping transforms the initial embedding
vectors to yield a small set of instance labels which are distinct (for negative
pairs) and compact (for positive pairs).

\begin{figure}[t]
\centering
   \includegraphics[width=0.95\linewidth]{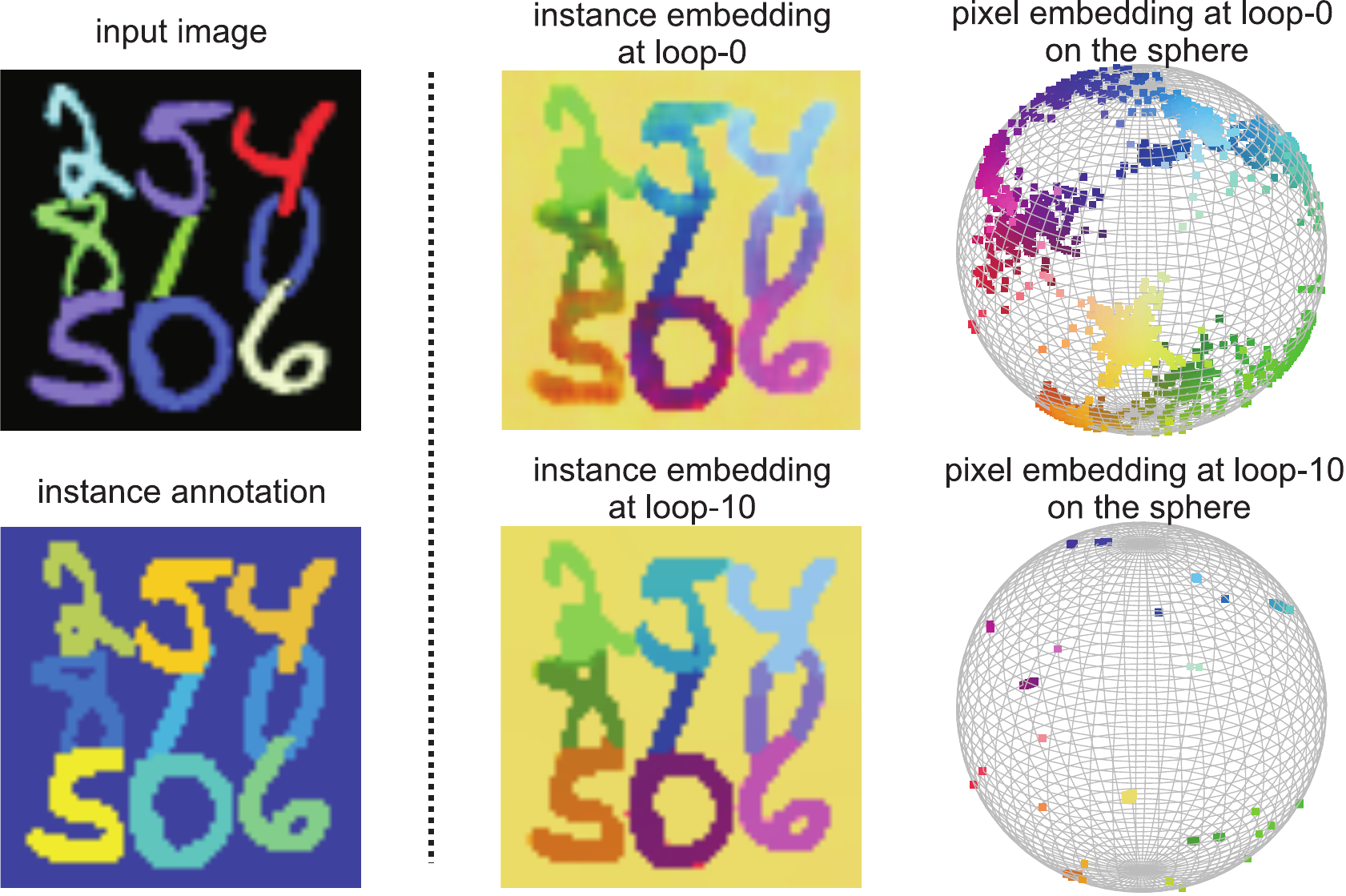}
   \vspace{-2mm}
   \caption{Demonstration of mean-shift grouping on a synthetic image and with
   ground-truth instance identities (left panel).  Right panel: the pixel
   embedding visualization at 3-dimensional embedding sphere (upper row) and
   after 10 iterations of recurrent mean-shift grouping (bottom row).
   }
   \vspace{-3mm}
\label{fig:mnist_demo}
\end{figure}

\subsection{End-to-end training}
It's straightforward to compute the derivatives of the recurrent mean shift
grouping module w.r.t $\X$ based on the the chain rule so our whole system is
end-to-end trainable through back-propagation.  Details about the derivative
computation can be found in the appendix.  To understand the benefit
of end-to-end training, we visualize the embedding gradient with and without
the grouping module (Fig.~\ref{fig:analysis_show_on_paper}).  Interestingly, we
observe that the gradient backpropagated through mean shift focuses on fixing
the embedding in uncertain regions, e.g.  instance boundaries, while suggesting
small magnitude updates for those errors which will be easily fixed by the
mean-shift iteration.

While we could simply apply the pairwise embedding loss to the final output of
the mean-shift grouping, in practice we accumulate the loss over all iterations
(including the initial embedding regression).
We unroll the recurrent grouping module into $T$ loops,
and accumulate the same loss function at the unrolled loop-$t$:
\begin{equation}
\small
\begin{split}
\ell^{t} =&  \sum_{k=1}^M  \sum_{i,j \in S_k} \frac{w^k_i w^k_j}{\vert S_k \vert}  \Big( \1_{\{y_i=y_j\}} ( 1 - s_{ij}^{t})   + \1_{\{y_i\not=y_j\}} [s^{t}_{ij}-\alpha]_{+} \Big) \\
\ell =& \sum_{t=1}^T \ell^{t}
\end{split}
\nonumber
\end{equation}

\begin{figure}[t]
\centering
   \includegraphics[width=1\linewidth]{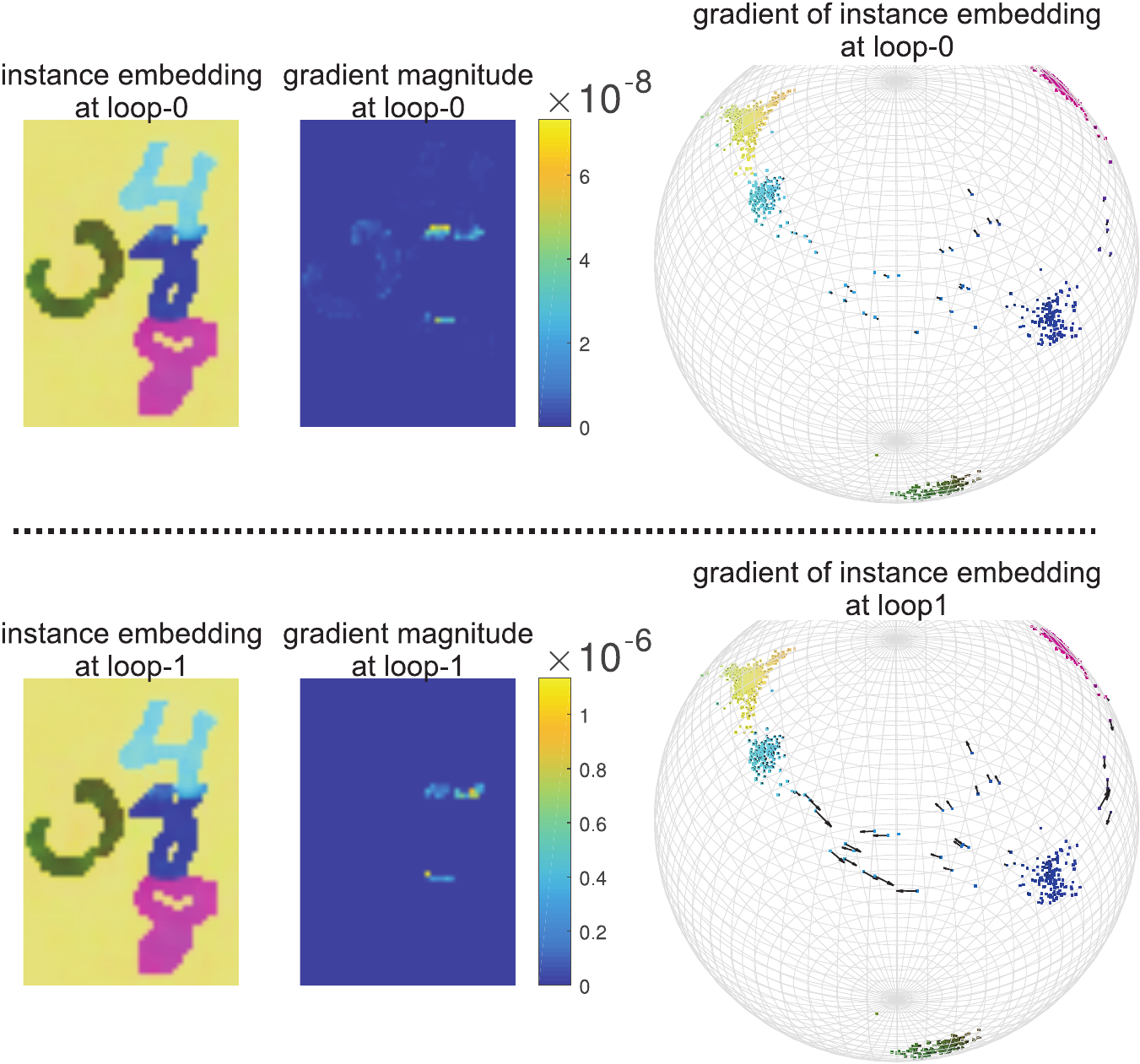}
   \vspace{-6mm}
   \caption{To analyze the recurrent mean shift grouping module, we compare the
   embedding vector gradients with and without one loop of grouping.  The
   length of arrows in the projection demonstrates the gradient magnitude,
   which are also depicted in maps as the second column. Backpropagating the
   loss through the grouping module serves to focus updates on embeddings of
   ambiguous pixels near boundaries while ignoring pixels with small errors
   which will be corrected by the subsequent grouping process.
   }
   \vspace{-2mm}
\label{fig:analysis_show_on_paper}
\end{figure}

\section{Experiments}

We now describe experiments in training our framework to deal a
variety of pixel-labeling problems, including boundary detection, object
proposal detection, semantic segmentation and instance-level semantic
segmentation.

\subsection{Tasks, Datasets and Implementation}

We illustrate the advantages of the proposed modules on several large-scale
datasets.  First, to illustrate the ability of the instance-aware weighting and
uniform sampling mechanism to handle imbalanced data and low embedding
dimension, we use the BSDS500~\cite{arbelaez2011contour} dataset to train a
boundary detector for boundary detection ($>90\%$ pixels are
non-boundary pixels).  We train with the standard
split~\cite{arbelaez2011contour,xie2015holistically},
using 300 train-val images to train our model based on
ResNet50~\cite{he2016deep} and evaluate on the remaining 200 test images.
Second, to explore instance segmentation and object proposal generation, we use
PASCAL VOC 2012 dataset~\cite{everingham2010pascal} with additional instance
mask annotations provided by \cite{hariharan2011semantic}. This provides 10,582
and 1,449 images for training and evaluation, respectively.

We implement our approach using the toolbox
MatConvNet~\cite{vedaldi2015matconvnet}, and train using SGD on a single Titan
X GPU.
\footnote{The code and trained models can be found at
{\color{blue} \emph{
{https://github.com/aimerykong/Recurrent-Pixel-Embedding-for-Instance-Grouping}}}}.
To compute calibrated cosine similarity, we utilize an L2-normalization
layer before matrix multiplication~\cite{kong2016low}, which also contains
random sampling with a hyper-parameter to control the ratio of pixels to be
sampled for an image.  In practice, we observe that performance does not depend
strongly on this ratio and hence set it based on available (GPU) memory.

While our modules are architecture agnostic, we use the
ResNet50 and ResNet101 models~\cite{he2016deep} pre-trained over
ImageNet~\cite{deng2009imagenet} as the backbone.
Similar to~\cite{chen2016deeplab}, we increase the output resolution of ResNet
by removing the top global $7\times 7$ pooling layer and the last two $2\times2$
pooling layers, replacing them with atrous convolution with dilation rate 2 and
4, respectively to maintain a spatial sampling rate. Our model thus outputs
predictions at $1/8$ the input resolution which are upsampled for benchmarking.

We augment the training set using random scaling by $s\in [0.5, 1.5]$, in-plane
rotation by $[-10^\circ,10^\circ]$ degrees, random left-right flips, random
crops with 20-pixel margin and of size divisible by 8, and color jittering.
When training the model, we fix the batch normalization in ResNet backbone,
using the same constant global moments in both training and testing.
Throughout training, we set batch size to one where the batch is a single input
image.
We use the ``poly'' learning rate policy~\cite{chen2016deeplab} with a base
learning rate of $2.5e-4$ scaled as a function of iteration by
$(1-\frac{iter}{maxiter})^{0.9}$.

\subsection{Boundary Detection}

\begin{figure}[t]
\centering
   \includegraphics[width=0.98\linewidth]{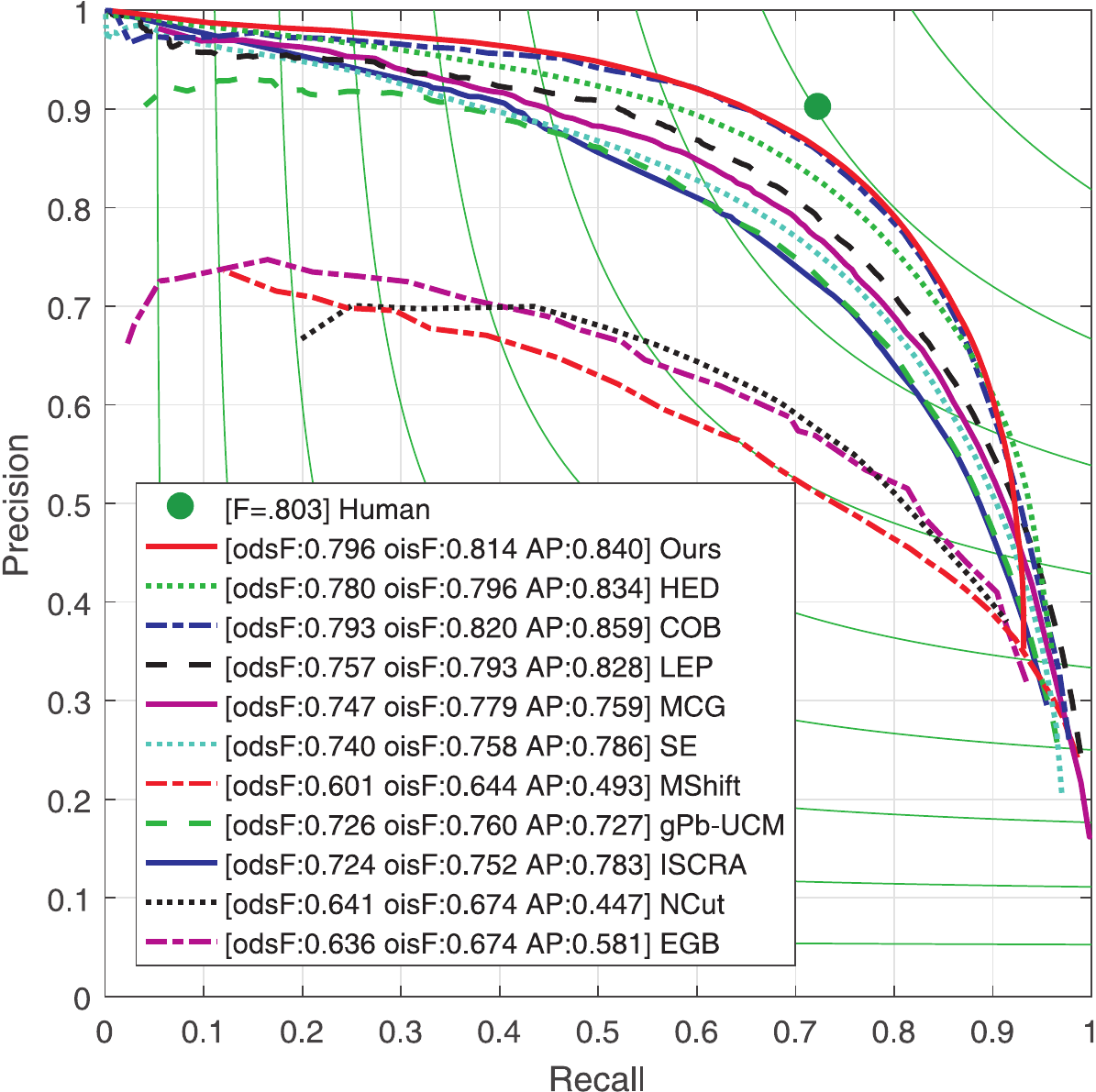}
   \vspace{-2mm}
   \caption{Boundary detection performance on BSDS500}
\label{fig:boundary_prcurve}
\vspace{-3mm}
\end{figure}

\begin{figure}[t]
\centering
   \includegraphics[width=1\linewidth]{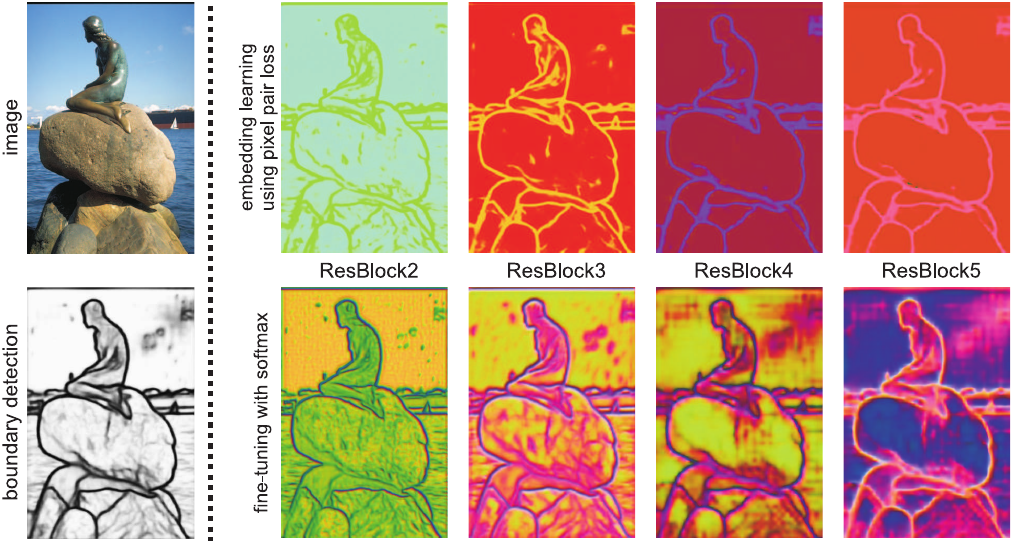}
\vspace{-5mm}
   \caption{
   Visualization of boundary detection embeddings.  We show the 3D embedding as
   RGB images (more examples in appendix).  The upper and lower row in the
   right panel show embedding vectors at different layers from the model before
   and after fine-tuning using logistic loss.  After fine-tuning not only
   predict the boundary pixels, but also encode boundary orientation and signed
   distance to the boundary, similar to supervised embedding approaches
   \cite{sironi2014multiscale,uhrig2016pixel,bai2016deep}
   }
\label{fig:boundary_show_in_paper}
\vspace{-2mm}
\end{figure}

For boundary detection, we first train a model to group the pixels into
boundary or non-boundary groups.  Similar to
COB~\cite{maninis2017convolutional} and HED~\cite{xie2015holistically}, we
include multiple branches over ResBlock $2, 3, 4, 5$ for training.
Since the number of instances labels is 2, we learn a simple 3-dimensional
embedding space which has the advantage of easy visualization as an RGB
image.  Fig.~\ref{fig:boundary_show_in_paper} shows the resulting embeddings
in the first row of each panel.  Note that even though we didn't utilize
mean-shift grouping, the trained embedding already produces compact clusters.
To compare quantitatively to the state-of-the-art, we learn a fusion
layer that combines predictions from multiple levels of the feature
hierarchy fine-tuned with a logistic loss to match the binary output.
Fig.~\ref{fig:boundary_show_in_paper} shows the results in the second row.
Interestingly, we can see that the fine-tuned model embeddings
encode not only boundary presence/absence but also the orientation
and signed distance to nearby boundaries.

Quantitatively,
we compare our model to
COB~\cite{maninis2017convolutional},
HED~\cite{xie2015holistically},
CEDN~\cite{yang2016object},
LEP~\cite{najman1996geodesic},
UCM~\cite{arbelaez2011contour},
ISCRA~\cite{ren2013image},
NCuts~\cite{shi2000normalized},
EGB~\cite{felzenszwalb2004efficient},
and the original mean shift (MShift) segmentation algorithm~\cite{comaniciu2002mean}.
Fig.~\ref{fig:boundary_prcurve} shows standard benchmark precision-recall for
all the methods, demonstrating our model achieves state-of-the-art performance.
Note that our model has the same architecture of COB~\cite{maninis2017convolutional}
except with a different
loss functions and no explicit branches to compute boundary orientation.  Our
embedding loss by naturally pushes boundary pixel embeddings to be similar
which is also the desirable property for detecting boundaries using logistic
loss.  Note that it is possible to surpass human performance with several
sophisticated techniques~\cite{kokkinos2015pushing}, we don't pursue
this as it is out the scope of this paper.

\subsection{Object Proposal Detection}
\begin{figure}[t]
\centering
   \includegraphics[width=1\linewidth]{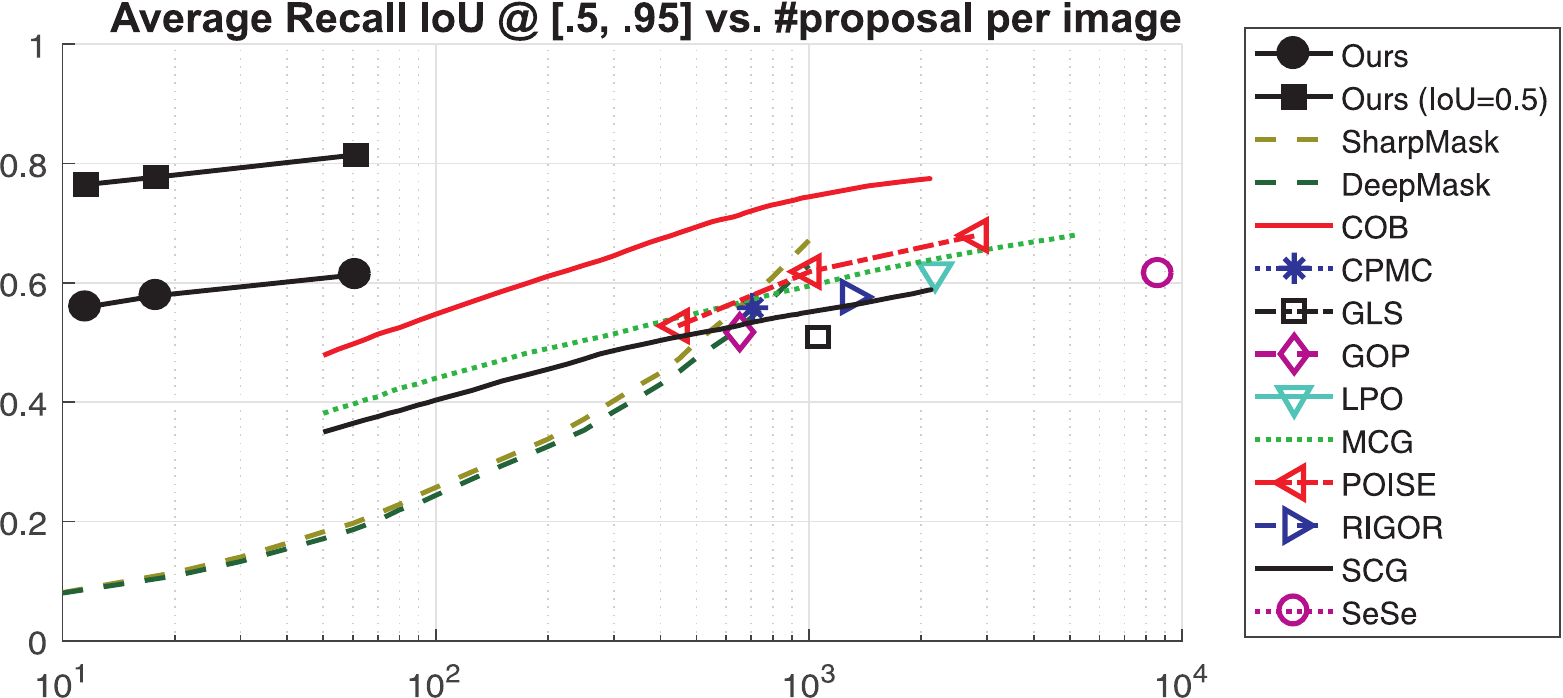}
   \vspace{-6mm}
   \caption{Segmented object proposals evaluation on PASCAL VOC 2012 validation
   set measured by Average Recall (AR) at IoU from 0.5 to 0.95 and step size as 0.5.
   We also include the curve for our method at IoU=0.5.
   }
\label{fig:plot_perf_objPropDet}
\end{figure}

Object proposals are an integral part of current object detection and semantic
segmentation pipelines~\cite{ren2015faster, he2017mask}, as they provide a
reduced search space of locations, scales and shapes for subsequent
recognition.  State-of-the-art methods usually involve training models that
output large numbers of proposals, particularly those based on bounding boxes.
Here we demonstrate that by training our framework with 64-dimensional
embedding space on the object instance level annotations, we are able to
produce very high quality object proposals by grouping the pixels into
instances.  It is worth noting that due to the nature of our grouping module,
far fewer number of proposals are produced with much higher quality.  We
compare against the most recent techniques including
POISE~\cite{humayun2015middle},
LPO~\cite{krahenbuhl2015learning},
CPMC~\cite{carreira2012cpmc},
GOP~\cite{krahenbuhl2014geodesic},
SeSe~\cite{uijlings2013selective},
GLS~\cite{rantalankila2014generating},
RIGOR~\cite{humayun2014rigor}.


Fig.~\ref{fig:plot_perf_objPropDet} shows the Average Recall
(AR)~\cite{hosang2016makes} with respect to the number of object
proposals\footnote{Our basic model produces $\sim10$ proposals per image.  In order
to plot a curve for our model for larger numbers of proposals, we run the mean
shift grouping with multiple smaller bandwidth parameters, pool the results,
and remove redundant proposals.}. Our model performs remarkably well compared
to other methods, achieving high average recall of ground-truth objects with two
orders of magnitude fewer proposals.  We also plot the curves for
SharpMask~\cite{pinheiro2015learning} and DeepMask~\cite{pinheiro2016learning}
using the proposals released by the authors. Despite only training on PASCAL,
we outperform these models which were trained on the much larger COCO dataset~\cite{lin2014microsoft}.
In Table~\ref{tab:objProposalDet} we report the total average recall at
IoU$=0.5$ for some recently proposed proposal detection methods, including
unpublished work inst-DML~\cite{fathi2017semantic} which is similar in spirit
to our model but learns a Euclidean distance based metric to group pixels.
We can clearly see that our method achieves significantly better results than
existing methods.

{
\setlength{\tabcolsep}{0.25em}
\begin{table}
\centering
{\footnotesize
\begin{tabular}{c | c c c  c |  c  }
\hline
 $\#$prop. & SCG~\cite{pont2017multiscale}
                     & MCG~\cite{pont2017multiscale}
                     & COB~\cite{maninis2017convolutional}
                     & inst-DML~\cite{fathi2017semantic}
                     & Ours      \\
\hline
10               & -            & -         & -     &  0.558     & 0.769 \\
60               & 0.624        & 0.652     & 0.738 &  0.667     & 0.814 \\
\hline
\end{tabular}
}
\vspace{-3mm}
\caption{Object proposal detection on PASCAL VOC 2012 validation set measured
by total Average Recall (AR) at IoU=0.50 and various number of proposals per image.}
\vspace{-2mm}
\label{tab:objProposalDet}
\end{table}
}

\begin{figure}[t]
\centering
   \includegraphics[width=0.80\linewidth]{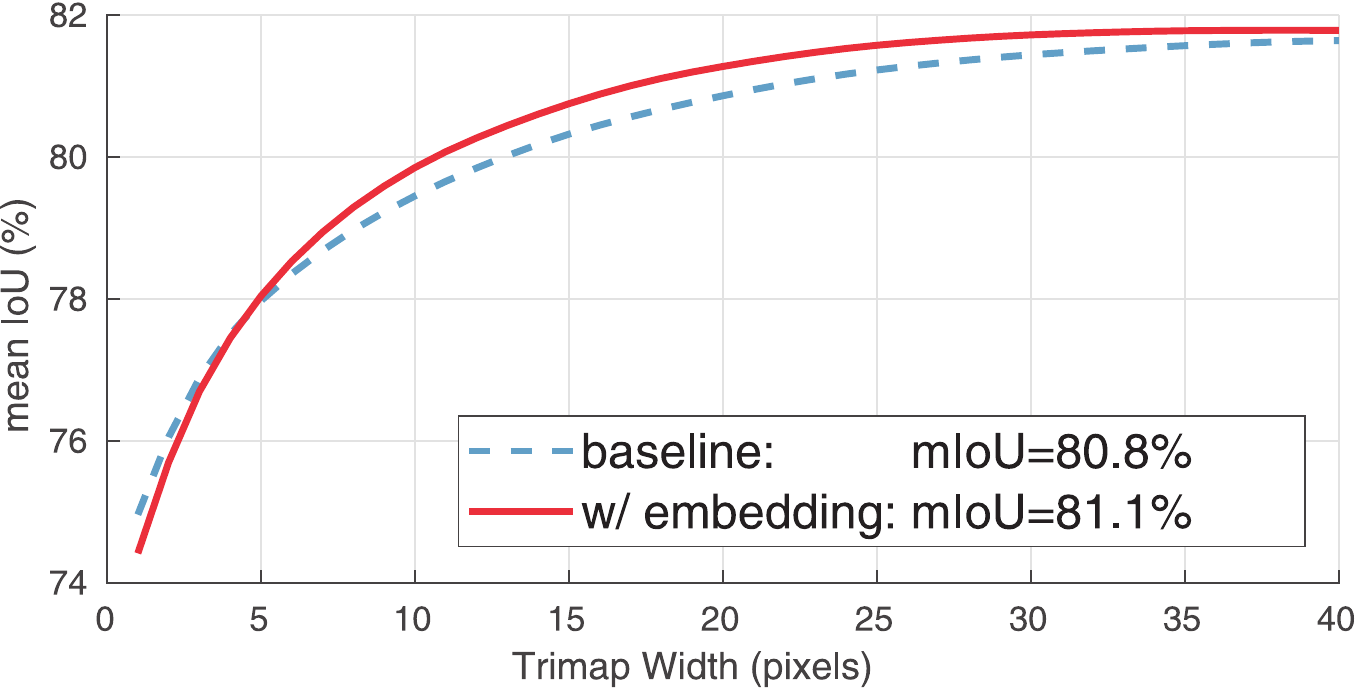} 
   \vspace{-2mm}
   \caption{Semantic segmentation performance as a function of distance from ground-truth
   object boundaries comparing a baseline model trained with cross-entropy loss versus a
   model which also includes embedding loss.
   }
\label{fig:trimap_semanticSeg}
\vspace{-1mm}
\end{figure}

\begin{figure}[t]
\centering
   \includegraphics[width=1\linewidth]{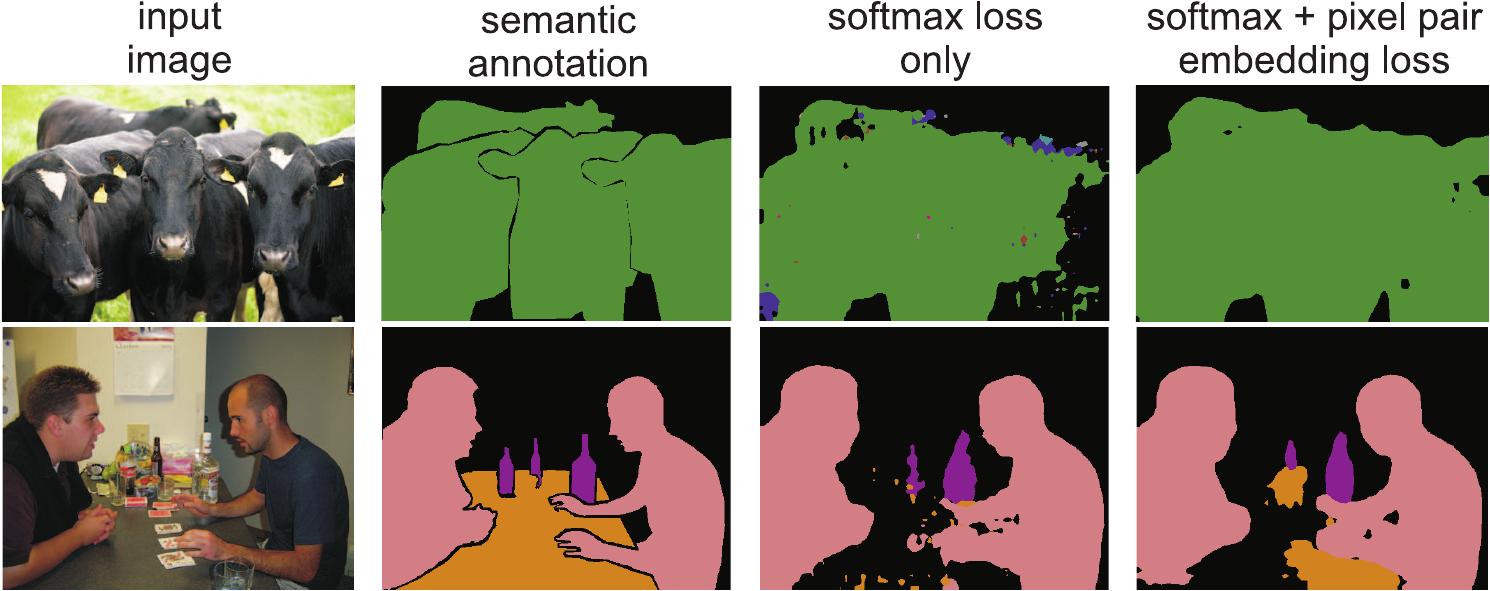} 
   \vspace{-5mm}
   \caption{The proposed embedding loss improves semantic segmentation by
   forcing the pixel feature vectors to be similar within the segments.
   Randomly selected images from PASCAL VOC2012 validation set.}
\label{fig:semantic_seg_demo}
\vspace{-3mm}
\end{figure}

\begin{figure*}[ht]
\centering
   \includegraphics[width=0.99\linewidth]{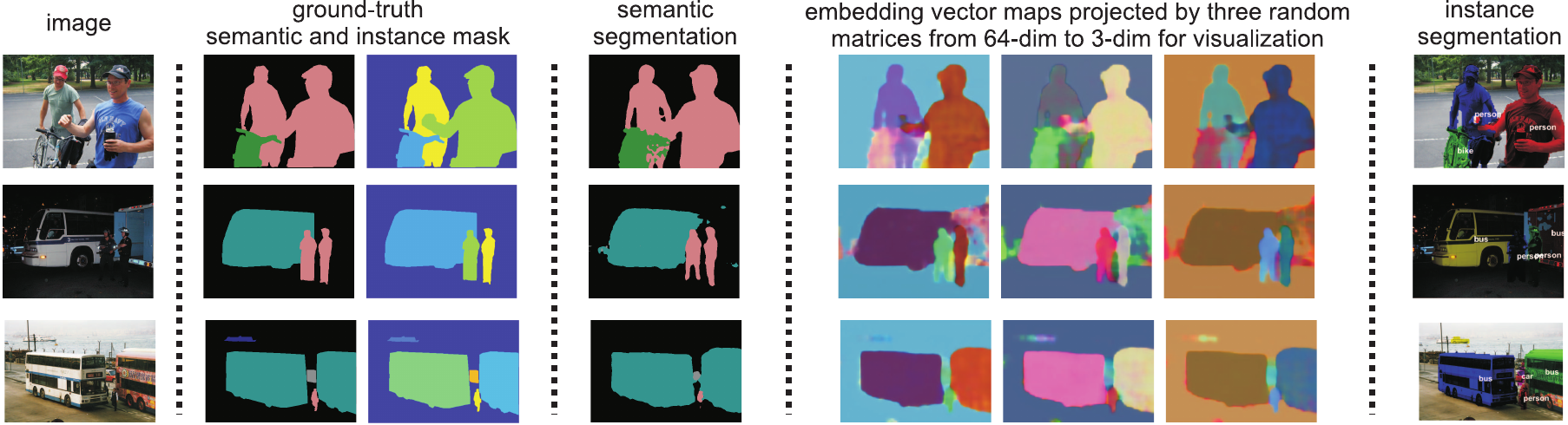}
   \vspace{-3mm}
   \caption{Visualization of generic/instance-level semantic segmentation on
   random PASCAL VOC 2012 validation images.
   }
\label{fig:instSeg_show_in_paper}
\vspace{-1mm}
\end{figure*}

{
\setlength{\tabcolsep}{0.3em} 
\begin{table*}[!htbp]
\footnotesize
\begin{center}
\begin{tabular}{| c | c c c c c c c c c c c c c c c c c c c c | c |}
\hline
\footnotesize{Method} & \begin{sideways}\footnotesize{plane}\end{sideways} & \begin{sideways}\footnotesize{bike}\end{sideways} & \begin{sideways}\footnotesize{bird}\end{sideways} & \begin{sideways}\footnotesize{boat}\end{sideways} & \begin{sideways}\footnotesize{bottle}\end{sideways} & \begin{sideways}\footnotesize{bus}\end{sideways} & \begin{sideways}\footnotesize{car}\end{sideways} & \begin{sideways}\footnotesize{cat}\end{sideways} & \begin{sideways}\footnotesize{chair}\end{sideways} & \begin{sideways}\footnotesize{cow}\end{sideways} & \begin{sideways}\footnotesize{table}\end{sideways} & \begin{sideways}\footnotesize{dog}\end{sideways} & \begin{sideways}\footnotesize{horse}\end{sideways} & \begin{sideways}\footnotesize{motor}\end{sideways} & \begin{sideways}\footnotesize{person}\end{sideways} & \begin{sideways}\footnotesize{plant}\end{sideways} & \begin{sideways}\footnotesize{sheep}\end{sideways} & \begin{sideways}\footnotesize{sofa}\end{sideways} & \begin{sideways}\footnotesize{train}\end{sideways} & \begin{sideways}\footnotesize{tv}\end{sideways}  & \begin{sideways}\footnotesize{mean}\end{sideways} \\ \hline \hline
     SDS \cite{hariharan2014simultaneous}         & 58.8 & 0.5  & 60.1 & 34.4 & 29.5 & 60.6 & 40.0 & 73.6 & 6.5  & 52.4 & 31.7 & 62.0 & 49.1 & 45.6 & 47.9 & 22.6 & 43.5 & 26.9 & 66.2 & 66.1 & 43.8 \\
     Chen et al. \cite{chen2015multi} & 63.6 & 0.3  & 61.5 & 43.9 & 33.8 & 67.3 & 46.9 & 74.4 & 8.6  & 52.3 & 31.3 & 63.5 & 48.8 & 47.9 & 48.3 & 26.3 & 40.1 & 33.5 & 66.7 & 67.8 & 46.3 \\
     PFN \cite{liang2015proposal}     & 76.4 & 15.6 & 74.2 & 54.1 & 26.3 & 73.8 & 31.4 & 92.1 & 17.4 & 73.7 & 48.1 & 82.2 & 81.7 & 72.0 & 48.4 & 23.7 & 57.7 & 64.4 & 88.9 & 72.3 & 58.7 \\
     MNC \cite{dai2016instance}            & -    & -    & -    & -    & -    & -    & -    & -    & -    & -    & -    & -    & -    & -    & -    & -    & -    & -    & -    & -    & 63.5 \\
     Li et al. \cite{li2016fully}     & -    & -    & -    & -    & -    & -    & -    & -    & -    & -    & -    & -    & -    & -    & -    & -    & -    & -    & -    & -    & 65.7 \\
     R2-IOS \cite{liang2016reversible}          & 87.0 & 6.1  & 90.3 & 67.9 & 48.4 & 86.2 & 68.3 & 90.3 & 24.5 & 84.2 & 29.6 & 91.0 & 71.2 & 79.9 & 60.4 & 42.4 & 67.4 & 61.7 & 94.3 & 82.1 & 66.7 \\
     Assoc. Embed.~\cite{newell2016associative} & -    & -    & -    & -    & -    & -    & -    & -    & -    & -    & -    & -    & -    & -    & -    & -    & -    & -    & -    & -    & 35.1 \\
     inst-DML~\cite{fathi2017semantic}                             & 69.7    & 1.2    & 78.2    & 53.8    & 42.2    & 80.1    & 57.4    & 88.8    & 16.0    & 73.2    & 57.9    & 88.4    & 78.9    & 80.0    & 68.0    & 28.0    & 61.5    & 61.3    & 87.5    & 70.4    & 62.1    \\
     \hline
     Ours
        & 85.9 & 10.0 & 74.3 & 54.6 & 43.7
        & 81.3 & 64.1 & 86.1 & 17.5 & 77.5
        & 57.0 & 89.2 & 77.8 & 83.7 & 67.9
        & 31.2 & 62.5 & 63.3 & 88.6 & 74.2
     & 64.5   \\
    \hline

\end{tabular}
\vspace{-5mm}
\end{center}
  \caption{Instance-level segmentation comparison using APr metric at 0.5 IoU on the PASCAL VOC 2012 validation set.}
\vspace{-2mm}
\label{tab:results_iou}
\end{table*}
}

\subsection{Semantic Instance Detection}

As a final test of our method, we also train it to produce semantic labels
which are combined with our instance proposal method to recognize the detected
proposals.

For semantic segmentation which is a k-way classification problem, we train a
model using cross-entropy loss alongside our embedding loss.  Similar to our
proposal detection model, we use a 64-dimension embedding space on top of
DeepLab-v3~\cite{chen2017rethinking} as our base model.  While there are more
complex methods in literature such as PSPNet~\cite{zhao2016pyramid} and which
augment training with additional data (e.g., COCO~\cite{lin2014microsoft} or
JFT-300M dataset~\cite{sun2017revisiting}) and utilize ensembles and
post-processing, we focus on a simple experiment training the base model
with/without the proposed pixel pair embedding loss to demonstrate the
effectiveness. 

In addition to reporting mean intersection over union (mIoU) over all
classes,
we also computed mIoU restricted to a narrow band of pixels around the
ground-truth boundaries.  This partition into figure/boundary/background is
sometimes referred to as a tri-map in the matting literature and has been
previously utilized in analyzing semantic segmentation
performance~\cite{kohli2009robust, chen2016deeplab, ghiasi2016laplacian}.
Fig.~\ref{fig:trimap_semanticSeg} shows the mIoU as a function of the width
of the tri-map boundary zone.  This demonstrates that with embedding loss
yields performance gains over cross-entropy primarily far from ground-truth
boundaries where it successfully fills in holes in the segments output
(see also qualitative results in Fig.~\ref{fig:semantic_seg_demo}).
This is in spirit similar to the model in \cite{harley2017segmentation},
which considers local consistency to improve spatial precision.
However, our uniform sampling allows for long-range interactions between pixels.

To label detected instances with semantic labels, we use the semantic
segmentation model described above to generate labels and then use a simple
voting strategy to transfer these predictions to the instance proposals.  In
order to produce a final confidence score associated with each proposed object,
we train a linear regressor to score each object instance based on its
morphology (e.g., size, connectedness) and the consistency w.r.t. the semantic
segmentation prediction. We note this is substantially simpler than approaches
based, e.g. on Faster-RCNN~\cite{ren2015faster} which use much richer convolutional features to
rescore segmented instances~\cite{he2017mask}.

Comparison of instance detection performance are displayed in
Table~\ref{tab:results_iou}.  We use a standard IoU threshold of 0.5 to
identify true positives, unless an ground-truth instance has already been
detected by a higher scoring proposal in which case it is a false positive.  We
report the average precision per-class as well as the average all classes (as
in~\cite{hariharan2011semantic}).  Our approach yields competitive performance
on VOC validation despite our simple re-scoring.  Among the competing methods,
the one closest to our model is inst-DML~\cite{fathi2017semantic}, that learns
Euclidean distance based metric with logistic loss.  The inst-DML approach
relies on generating pixel seeds to derive instance masks.  The pixel seeds may
fail to correctly detect thin structures which perhaps explains why this method
performs 10x worse than our method on the bike category. In contrast, our
mean-shift grouping approach doesn't make strong assumptions about the
object shape or topology.

For visualization purposes, we generate three random matrices projections of
the 64-dimensional embedding and display them in the spatial domain as RGB
images.  Fig.~\ref{fig:instSeg_show_in_paper} shows the embedding
visualization, as well as predicted semantic segmentation and instance-level
segmentation.  From the visualization, we can see the instance-level semantic
segmentation outputs complete object instances even though semantic
segmentation results are noisy, such as the bike in the first image in
Fig.~\ref{fig:instSeg_show_in_paper}. The instance embedding provides
important details that resolve both inter- and intra-class instance overlap
which are not emphasized in the semantic segmentation loss.

\section{Conclusion and Future Work}
We have presented an end-to-end trainable framework for solving pixel-labeling
vision problems based on two novel contributions: a pixel-pairwise loss based
on spherical max-margin embedding and a variant of mean shift grouping embedded
in a recurrent architecture. These two components mesh closely to provide a
framework for robustly recognizing variable numbers of instances without
requiring heuristic post-processing or hyperparameter tuning to account for
widely varying instance size or class-imbalance. The approach is simple and
amenable to theoretical analysis, and when coupled with standard architectures
yields instance proposal generation which substantially outperforms
state-of-the-art. Our experiments demonstrate the potential for instance
embedding and open many opportunities for future work including learn-able
variants of mean-shift grouping, extension to other pixel-level domains such as
encoding surface shape, depth and figure-ground and multi-task embeddings.




\section*{Acknowledgement}
This project is supported by NSF grants
IIS-1618806, IIS-1253538, DBI-1262547 and a hardware donation
from NVIDIA.
Shu Kong personally thanks Mr. Kevis-Kokitsi Maninis,
Dr. Alireza Fathi, Dr. Kevin Murphy and Dr. Rahul Sukthankar for the helpful discussion, advice and encouragement.

{\small
\bibliographystyle{ieee}
\bibliography{egbib}
}

\clearpage\mbox{}

\begin{center}
 {\large \textbf{Appendix}}
\end{center}

In this appendix, we provide proofs of the propositions
introduced in the main paper for understanding our objective function and
grouping mechanism.  Then, we provide the details of the mean-shift algorithm,
computation of gradients and how it is adapted for recurrent grouping.  We
illustrate how the gradients are back-propagated to the input embedding using a
toy example.
Finally, we include more qualitative results on boundary detection and instance
segmentation.

\setcounter{section}{0}
\setcounter{propositions}{0}

\section{Analysis of Pairwise Loss for Spherical Embedding}
In this section, we provide proofs for the propositions presented in the paper
which provide some analytical understanding of our proposed objective function,
and the mechanism for subsequent pixel grouping mechanism.

\begin{propositions}
\label{theorem:ObjLowerBound_appendix}
For $n$ vectors $\{\x_1, \dots, \x_n\}$, the total intra-pixel similarity is
bounded as $\sum_{i\neq j} \x_i^T\x_j \ge -\sum_{i=1}^n  \Vert \x_i \Vert_2^2$.
In particular, for $n$ vectors on the hypersphere where $\Vert\x_i\Vert_2 = 1$,
we have $\sum_{i\neq j} \x_i^T\x_j \ge -n$.
\end{propositions}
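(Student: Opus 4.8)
The plan is to use the elementary fact that the squared norm of any real vector is nonnegative, applied to the sum $\s \coloneqq \sum_{i=1}^n \x_i$. Expanding $\|\s\|_2^2$ via bilinearity of the inner product separates into diagonal terms $\x_i^T\x_i = \|\x_i\|_2^2$ and off-diagonal terms $\x_i^T\x_j$ with $i \neq j$.

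Concretely, first I would write
\[
0 \;\le\; \Big\| \sum_{i=1}^n \x_i \Big\|_2^2 \;=\; \sum_{i=1}^n \|\x_i\|_2^2 \;+\; \sum_{i \neq j} \x_i^T\x_j .
\]
Then I would simply rearrange this inequality to obtain $\sum_{i \neq j} \x_i^T\x_j \ge -\sum_{i=1}^n \|\x_i\|_2^2$, which is the first claim. For the second claim, I would substitute the hypothesis $\|\x_i\|_2 = 1$ for all $i$, so that $\sum_{i=1}^n \|\x_i\|_2^2 = n$, yielding $\sum_{i\neq j}\x_i^T\x_j \ge -n$.

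There is essentially no main obstacle here: the only thing to be careful about is the bookkeeping of which index pairs appear in the double sum (ordered pairs $(i,j)$ with $i\neq j$, matching the convention used in Eq.~\ref{eq:obj}), and noting that the bound is attained precisely when $\sum_i \x_i = \0$, which is exactly the regime exploited in the subsequent discussion of instance-mean orthogonality. I would also remark that the argument makes no reference to the ambient dimension $D$, which is the point the proposition is meant to highlight.
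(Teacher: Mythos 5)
Your proof is correct and follows exactly the same route as the paper's: expand the nonnegative quantity $\Vert \sum_{i=1}^n \x_i \Vert_2^2$, separate diagonal from off-diagonal terms, rearrange, and then specialize to unit-norm vectors. Your added remarks about the equality case $\sum_i \x_i = \0$ and dimension-independence are accurate and consistent with how the paper uses the result.
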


\begin{proof}
  First note that $\Vert \x_1 + \dots + \x_n \Vert_2^2 \ge 0$.  We expand
  the square and collect all the cross terms so we have $\sum_{i} \x_i^T\x_i +
  \sum_{i\not=j}\x_i^T\x_j \ge 0$.  Therefore, $\sum_{i\not=j}\x_i^T\x_j \ge
  -\sum_{i=1}^n \Vert \x_i\Vert_2^2$.  When all the vectors are on the
  hyper-sphere, i.e. $\Vert \x_i\Vert_2 = 1$, then $\sum_{i\not=j}\x_i^T\x_j
  \ge -\sum_{i=1}^n \Vert \x_i\Vert_2^2= -n$. $\hfill\blacksquare$
\end{proof}



\begin{propositions}
\label{lemma:max_margin_appendix}
If $n$ vectors $\{\x_1, \dots, \x_n\}$ are distributed on a 2-sphere (i.e.
$\x_i \in \RB^3$ with $\Vert\x_i\Vert_2 = 1, \forall i=1\dots n$) then the
similarity between any pair is lower-bounded by $s_{ij} \geq
1-\Big(\frac{2\pi}{\sqrt{3}n} \Big)$. Therefore, choosing the parameter
$\alpha$ in the maximum margin term in objective function to be less
than
$1-\Big(\frac{2\pi}{\sqrt{3}n} \Big)$ results in positive loss even for
a perfect embedding of $n$ instances.
\end{propositions}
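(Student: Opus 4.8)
The plan is to reduce the statement to the classical ``hard-spheres''/Tammes bound and then translate that bound back through the definition of the calibrated cosine similarity. First I would record the elementary identity that, for unit vectors, $\Vert\x_i-\x_j\Vert_2^2 = 2 - 2\,\x_i^T\x_j = 4\,(1-s_{ij})$, hence $s_{ij} = 1 - \tfrac14\Vert\x_i-\x_j\Vert_2^2$. Thus maximizing the smallest pairwise similarity over configurations of $n$ unit vectors is \emph{exactly} the problem of placing $n$ points on $S^2$ so as to maximize the smallest pairwise (chordal) distance $d_n := \min_{i\neq j}\Vert\x_i-\x_j\Vert_2$. It then suffices to show $d_n^2 \leq \tfrac{8\pi}{\sqrt{3}\,n}$: the closest pair $i\neq j$ realizes $s_{ij} = 1-\tfrac14 d_n^2 \geq 1 - \tfrac{2\pi}{\sqrt{3}\,n}$, which is the operative inequality (the closest of the $n$ instance vectors is at least this similar, so the margin term cannot be satisfied for all pairs).

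Second, I would establish the distance bound by a packing/area argument. Around each $\x_i$ place the open spherical cap $C_i=\{y\in S^2 : \Vert y-\x_i\Vert_2 < d_n/2\}$; by the triangle inequality in $\RB^3$ the caps are pairwise disjoint, and a direct computation shows that a chordal cap of radius $r$ on the unit sphere has area exactly $\pi r^2$, so $n\cdot\pi(d_n/2)^2 \leq 4\pi$ and hence $d_n^2 \leq 16/n$. This already gives the (coarser) bound $s_{ij}\geq 1-4/n$. To recover the stated constant $\tfrac{2\pi}{\sqrt{3}} < 4$ one must account for the unavoidable wasted area between packed caps: the relevant fact is that the optimal packing density of small equal caps on $S^2$ tends to the planar hexagonal density $\tfrac{\pi}{2\sqrt{3}}$, which is precisely the asymptotic content of Habicht--van der Waerden / Fejes T\'oth cited in the text. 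Feeding this density into the same area count yields $n\cdot\pi(d_n/2)^2 \lesssim \tfrac{\pi}{2\sqrt{3}}\cdot 4\pi$, i.e.\ $d_n^2 \leq \tfrac{8\pi}{\sqrt{3}\,n}$ to leading order; lower-order ($O(n^{-2/3})$-type) corrections get absorbed into the clean bound stated here, consistent with Proposition~\ref{lemma:max_margin} in the main text where such a term is written out explicitly.

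Finally I would close the loop with the loss. Since every configuration of $n$ unit vectors on $S^2$ contains a pair $i\neq j$ with $s_{ij}\geq 1-\tfrac{2\pi}{\sqrt{3}\,n}$, choosing $\alpha < 1-\tfrac{2\pi}{\sqrt{3}\,n}$ forces $s_{ij}-\alpha > 0$ for that pair, so $[s_{ij}-\alpha]_{+} > 0$ and the negative-pair contribution to $\ell$ in Eq.~\ref{eq:obj} is strictly positive no matter how the $n$ instances are embedded; hence no embedding of $n$ distinct instances into a $2$-sphere can drive the loss to zero with that $\alpha$.

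The main obstacle is entirely the sharp constant in the second step: the $16/n$ bound is self-contained, but recovering $\tfrac{2\pi}{\sqrt{3}}$ genuinely requires the optimal spherical-packing-density result (the Tammes problem has no closed form for general $n$), so the honest proof leans on that cited asymptotic and presents the stated inequality as its leading-order consequence rather than as a finite-$n$ identity.
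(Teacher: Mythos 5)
Your proof is correct and follows essentially the same route as the paper's: both convert the calibrated similarity to chordal distance via $s_{ij} = 1 - \tfrac{1}{4}\Vert\x_i-\x_j\Vert_2^2$ and then invoke the cited Habicht--van der Waerden / Fejes T\'oth asymptotic bound on the minimum pairwise distance of $n$ points on the $2$-sphere to conclude that some pair must satisfy $s_{ij}\ge 1-\frac{2\pi}{\sqrt{3}n}$, which makes the hinge term strictly positive whenever $\alpha$ is below that threshold. Your supplementary cap-packing computation (yielding the self-contained but weaker bound $s_{ij}\ge 1-4/n$) and the hexagonal-density account of where the constant $\frac{2\pi}{\sqrt{3}}$ comes from are a nice addition the paper omits, but the load-bearing step is the same cited extremal-distance result.
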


We treat all the $n$ vectors as representatives of $n$ different instances in
the image and seek to minimize pairwise similarity, or equivalently maximize
pairwise distance (referred to as Tammes's problem, or the hard-spheres
problem~\cite{saff1997distributing}).

\begin{proof}
Let $d  = \max\limits_{\{\x_i\}} \min\limits_{i\not=j}\Vert \x_i-\x_j\Vert_2$
be the distance between the closest point pair of the optimally distributed
points.  Asymptotic results in~\cite{habicht1951lagerung} show that, for some
constant $C>0$,
\begin{equation}
  \Big(\frac{8\pi}{\sqrt{3}n}\Big)^{\frac{1}{2}} -Cn^{-\frac{2}{3}} \le d \le \Big( \frac{8\pi}{\sqrt{3}n} \Big)^{\frac{1}{2}}
\end{equation}
Since $\Vert \x_i-\x_j\Vert_2^2=2-2\x_i^T \x_j$, we can rewrite this bound in
terms of the similarity
$s_{ij} = \frac{1}{2}\left(1 + \frac{\x_i^T\x_j}{\Vert\x_i\Vert_2 \Vert\x_j\Vert_2}\right)$,
so that for any $i \not= j$:
\begin{equation}
1- \Big( \frac{2\pi}{\sqrt{3}N} \Big) \le s_{ij} \le  1- \frac{1}{4} \Bigg( \Big(\frac{8\pi}{\sqrt{3}N}\Big)^{\frac{1}{2}} -CN^{-\frac{2}{3}}\Bigg)^2
\end{equation}
Therefore, choosing $\alpha \leq 1- \Big( \frac{2\pi}{\sqrt{3}N} \Big)$,
guarantees that $[s_{ij}-\alpha]_{+}\geq 0$ for some pair $i \not= j$. Choosing
$\alpha > 1- \frac{1}{4} \Bigg( \Big(\frac{8\pi}{\sqrt{3}N}\Big)^{\frac{1}{2}}
-CN^{-\frac{2}{3}}\Bigg)^2$, guarantees the existence of an embedding with
$[s_{ij}-\alpha]_{+}=0$.
$\hfill\blacksquare$
\end{proof}

\begin{figure*}[t]
\centering
   \includegraphics[width=1\linewidth]{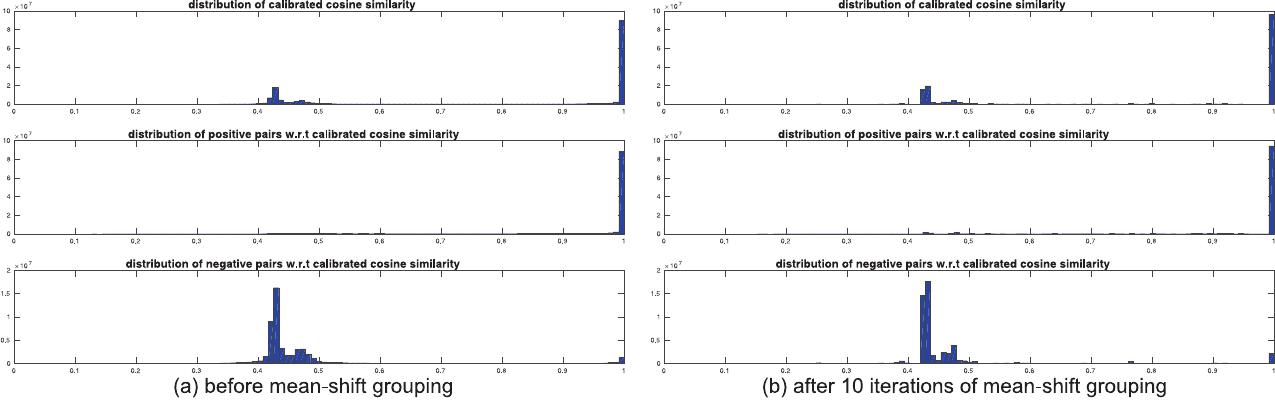}
   \caption{Distribution of calibrated cosine similarity between pairs of pixels.
   After 10 iterations of mean-shift grouping.  Margin is 0.5 for negative
   pairs.  From the figures, we believe that the mean shift grouping mechanism
   forces learning to focus on those pixel pairs that will not be corrected by
   mean shift grouping itself if running offline, and thus pushing down to
   parameters in the the deep neural network to learn how to correct them
   during training.
   }
\label{fig:pixel_pair_distr}
\end{figure*}

\section{Details of Recurrent Mean Shift Grouping }

There are two commonly used multivariate kernels in mean shift algorithm.  The
first, Epanechnikov kernel~\cite{epanechnikov1969non,cheng1995mean}, has the
following profile
\begin{equation}
K_E(\x)=
\begin{cases}
    \frac{1}{2} c_d^{-1} (d+2) (1-\Vert\x \Vert_2^2),& \text{if } \Vert\x\Vert_2\le 1\\
    0,              & \text{otherwise}
\end{cases}
\end{equation}
where $c_d$ is the volume of the unit $d$-dimensional sphere. The standard
mean-shift algorithm computes the gradient of the kernel density estimate
given by
\[
p(\x) = \frac{1}{N} \sum_{i=1}^N K_E(\frac{\x - \x_i}{b})
\]
and identifies modes (local maxima) where $\nabla p(\x)= 0$. The scale
parameter $b$ is known as the kernel bandwidth and determines the
smoothness of the estimator.  The gradient of $p(\x)$ can be elegantly computed
as the difference between $\x$ and the mean of all data points with $\Vert \x -
\x_i \Vert \leq b$, hence the name ``mean-shift'' for performing gradient
ascent.

Since the Epanechnikov profile is not differentiable at the boundary, we
use the squared exponential kernel adapted to vectors on the sphere:
\begin{equation}
\begin{split}
K(\x,\x_i) & \propto \exp( \delta^2 \x^T \x_i ) \\
\end{split}
\end{equation}
which can be viewed as a natural extension of the Gaussian to spherical data
(known as the von Mises Fisher (vMF)
distribution~\cite{fisher1953dispersion,banerjee2005clustering,mardia2009directional,kobayashi2010mises}).
In our experiments we set the bandwidth $\delta$ based on the margin $\alpha$
so that $\frac{1}{\delta}=\frac{1-\alpha}{3}$.

Our proposed algorithm also differs from the standard mean-shift clustering
(i.e., ~\cite{comaniciu1999mean}) in that rather than performing gradient
ascent on a fixed kernel density estimate $p(\x)$, at every iteration we
alternate between updating the embedding vectors $\{\x_i\}$ using gradient
ascent on $p(\x)$ and re-estimating the density $p(\x)$ for the updated
vectors.  This approach is termed Gaussian Blurring Mean Shift (GBMS) in
~\cite{carreira2008generalised} and has converge rate guarantees for
data which starts in compact clusters.

In the paper we visualized embedding vectors after GBMS for specific
examples.  Figure~\ref{fig:pixel_pair_distr} shows aggregate statistics over a
collection of images (in the experiment of instance segmentation).
We plot the distribution of pairwise similarities for
positive and negative pairs during forward propagation through 10 iterations.
We can observe that the mean shift module produces sharper distributions,
driving the similarity between positive pairs to 1 making it trivial to
identify instances.

\subsection{Gradient Calculation for Recurrent Mean Shift}

To backpropagate gradients through an iteration of GBMS, we break the
calculation into a sequence of steps below where we assume the vectors
in the data matrix $X$ have already been normalized to unit length.
\begin{equation}\small
\begin{split}
&\colorboxed{red}{
\begin{split}
\S = & \X^T\X \\
\end{split}
}\\
&\colorboxed{blue}{
\begin{split}
\K = & \exp(\delta^2 \S) \\
\end{split}
}, \\
&\colorboxed{green}{
\begin{split}
{\bf d} = & \K^T\1 \\
\end{split}
}\\
& \colorboxed{black}{
\begin{split}
\q = & {\bf d}^{-1} \\
\end{split}
}\\
&\colorboxed{orange}{
\begin{split}
{\bf P} = (1-\eta)\I + \eta\K \diag(\q)\\
\end{split}
}\\
&\colorboxed{cyan}{
\begin{split}
\Y = & \X{\bf P} \\
\end{split}
}\\
\end{split}
\label{eq:GBMS_matrix_form}
\end{equation}
where $\Y$  is the updated data after one iteration which is subsequently
renormalized to project back onto the sphere.  Let $\ell$ denote the loss
and $\odot$ denote element-wise product. Backpropagation gradients are
then given by:
\begin{equation}\small
\begin{split}
&\colorboxed{red}{
\begin{split}
&\frac{\partial \ell}{\partial \X} =  2\X \frac{\partial \ell}{\partial \S} \\
\end{split}
}\\
&\colorboxed{blue}{
\begin{split}
&\frac{\partial \ell}{\partial\S} =  \delta^2\exp(\delta^2\S) \odot \frac{\partial \ell}{\partial\K} \\
&\frac{\partial \ell}{\partial\delta} =  2\delta \sum_{ij} \Big( (s_{ij}) \odot \exp(\delta^2 s_{ij}) \odot \frac{\partial \ell}{\partial {k_{ij}}} \Big)\\
\end{split}
}\\
&\colorboxed{green}{
\begin{split}
&\frac{\partial \ell}{\partial{\bf K}} =  \1 \Big( \frac{\partial \ell}{\partial {\bf d}} \Big)^T \\
\end{split}
}\\
&\colorboxed{black}{
\begin{split}
&\frac{\partial \ell}{\partial{\bf d}} =  \frac{\partial \ell}{\partial{\bf q}} \odot (-{\bf d}^{-2}) \\
\end{split}
}\\
&\colorboxed{orange}{
\begin{split}
&\frac{\partial \ell}{\partial{\bf K}} =  \eta\Big(\frac{\partial \ell}{\partial{\bf P}}\Big) (\q\1^T)\\
&\frac{\partial \ell}{\partial{\bf q}} =  \eta\Big(\frac{\partial \ell}{\partial{\bf P}}\Big)^T \K\1 \\
\end{split}
}\\
&\colorboxed{cyan}{
\begin{split}
&\frac{\partial \ell}{\partial{\bf X}} =  \frac{\partial \ell}{\partial{\bf Y}} {\bf P}^T \\
&\frac{\partial \ell}{\partial{\bf P}} =  \X^T \frac{\partial \ell}{\partial{\bf Y}} \\
\end{split}
}
\end{split}
\end{equation}

%
%
%

\subsection{Toy Example of Mean Shift Backpropagation}
In the paper we show examples of the gradient vectors backpropagated through
recurrent mean shift to the initial embedding space. Backpropagation through
this fixed model modulates the loss on the learned embedding, increasing the
gradient for initial embedding vectors whose instance membership is ambiguous
and decreasing the gradient for embedding vectors that will be correctly
resolved by the recurrent grouping phase.

Figure \ref{fig:simulation_trajectory} shows a toy example highlighting the difference
between supervised and unsupervised clustering.  We generate  a set of 1-D data
points drawn from three Gaussian distributions with mean and standard deviation
as $(\mu=3, \sigma=0.2)$, $(\mu=4, \sigma=0.3)$ and $(\mu=5, \sigma=0.1)$,
respectively, as shown in Figure \ref{fig:simulation_trajectory} (a).
We use mean squared error for the loss with a fixed linear
regressor $y_i = 0.5*x_i-0.5$ and fixed target labels. The optimal embedding
would set $x_i=3$ if $y_i=1$, and $x_i=5$ if $y_i=2$. We perform 30 gradient
updates of the embedding vectors $x_i \leftarrow x_i - \alpha \nabla_{x_i}
\ell$ with a step size $\alpha$ as 0.1.
We analyze the behavior of Gaussian Blurring Mean Shift (GBMS) with bandwidth as $0.2$.

If running GBMS for unsupervised clustering on these data with the default setting (bandwidth is 0.2),
we can see they are grouped into three piles,
as shown in Figure \ref{fig:simulation_trajectory} (b).
If updating the data using gradient descent without GBMS inserted,
we end up with three visible clusters even though the data move towards the ideal embedding in terms of classification.
Figure \ref{fig:simulation_trajectory} (c) and (d) depict the trajectories of 100 random data points during the 30 updates and the final result, respectively.

Now we insert the GBMS module to update these data with different loops,
and compare how this effects the performance.
We show the updated data distributions and those after five loops of GBMS grouping in column (e) and (f) of Figure~\ref{fig:simulation_trajectory},
respectively.
We notice that, with GBMS,
all the data are grouped into two clusters;
while with GBMS grouping they become more compact and are located exactly on
the ``ideal spot'' for mapping into label space (i.e. 3 and 5) and achieving
zero loss.
On the other hand,
we also observe that, even though these settings incorporates different number of GBMS loops,
they achieve similar visual results in terms of clustering the data.
To dive into the subtle difference,
we randomly select 100 data and depict their trajectories in column (g) and (h)
of Figure~\ref{fig:simulation_trajectory},
using a single loss on top of the last GBMS loop or multiple losses over every GBMS loops,
respectively.
We have the following observations:
\begin{enumerate}
  \item By comparing with Figure~\ref{fig:simulation_trajectory} (c),
    which depicts update trajectories without GBMS,
    GBMS module provides larger gradient to update those data further from their ``ideal spot'' under both scenarios.
  \item From (g),
    we can see the final data are not updated into tight groups.
    This is because that the updating mechanism only sees data after (some loops of) GBMS,
    and knows that these data will be clustered into tight groups through GBMS.
  \item A single loss with more loops of GBMS provides greater gradient than that with fewer loops to update data, as seen in (g).
  \item With more losses over every loops of GBMS,
    the gradients become even larger that the data are grouped more tightly and more quickly.
    This is because that the updating mechanism also incorporates the gradients from the loss over the original data,
    along with those through these loops of GBMS.
\end{enumerate}

To summarize,
our GBMS based recurrent grouping module indeed provides meaningful gradient during training with back-propagation.
With the convergent dynamics of GBMS,
our grouping module becomes especially more powerful in learning to group data with suitable supervision.

\begin{figure*}[t]
\centering
   \includegraphics[width=0.9\linewidth]{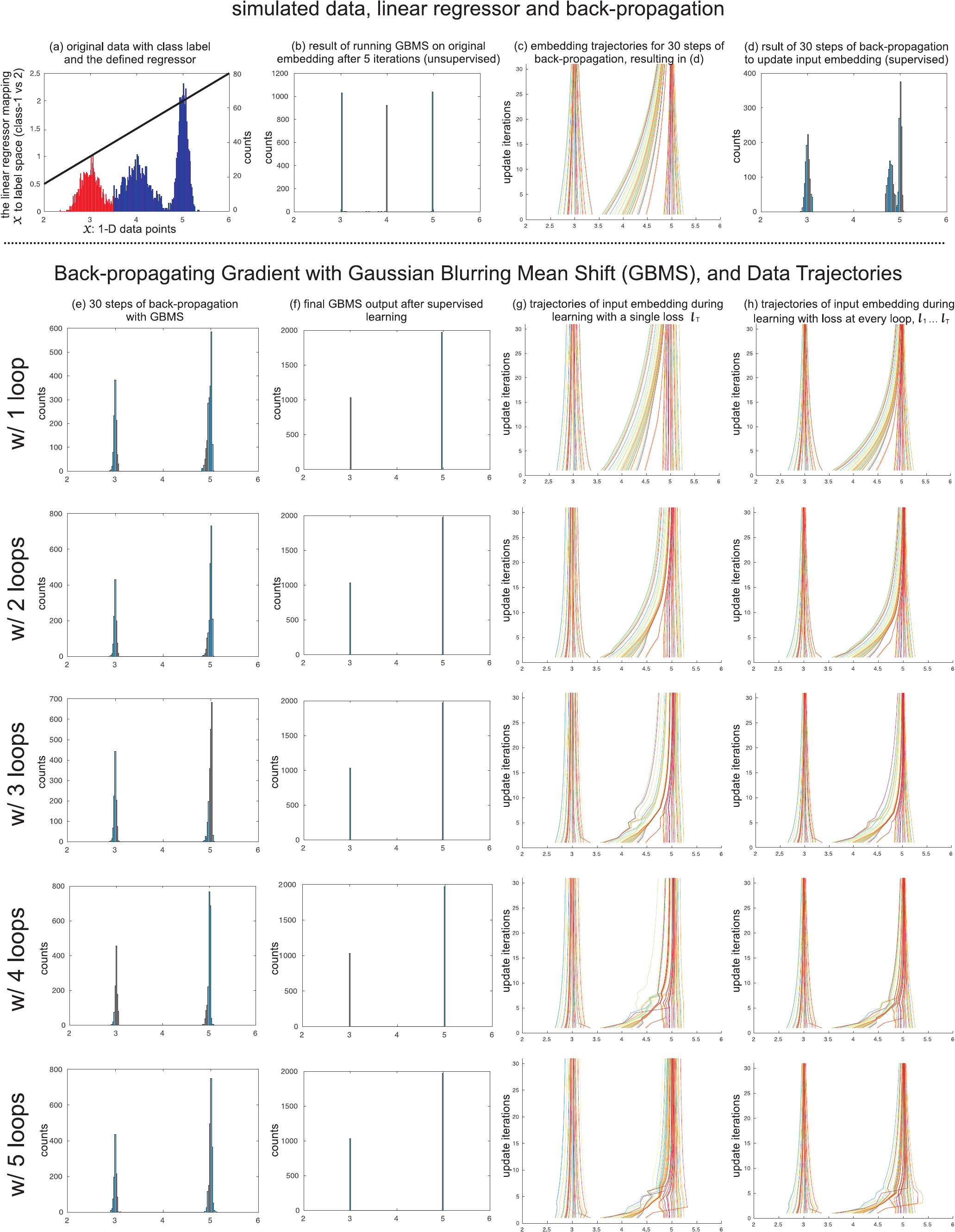}
   \caption{
   Trajectory of updating data using back-propagation without mean shift module (top row),
   and with the Gaussian Blurring Mean Shift (GBMS).
   To compare the results,
   we vary the number of GBMS loops in the grouping module,
   and use either a single loss at the final GBMS loop or multiple losses on all GBMS loops.
   All the configurations can shift data towards the ``ideal spots'' (3 or 5 depending on the label) in terms of the fixed regressor.
   }
\label{fig:simulation_trajectory}
\end{figure*}

\section{Additional Boundary Detection Results}

We show additional boundary detection results\footnote{Paper with high-resolution figures can be found at the
\href{http://www.ics.uci.edu/~skong2/SMMMSG.html}{Project Page}.} on BSDS500
dataset~\cite{arbelaez2011contour} based on our model in Figure
\ref{fig:boundary_show_appendix_part1},
\ref{fig:boundary_show_appendix_part3},
\ref{fig:boundary_show_appendix_part4}
and \ref{fig:boundary_show_appendix_part5}.
Specifically, besides showing the boundary detection result, we also show
3-dimensional pixel embeddings as RGB images before and after fine-tuning using
logistic loss.  From the consistent colors, we can see (1) our model
essentially carries out binary classification even using the pixel pair
embedding loss; (2) after fine-tuning with logistic loss, our model captures
also boundary orientation and signed distance to the boundary.  Figure
\ref{fig:demo_test_boundary} highlights this observation for an example image
containing round objects.  By zooming in one plate, we can observe a ``colorful
Mobius ring'', indicating the embedding features for the boundary also capture
boundary orientation and the signed distance to the boundary.

\begin{figure*}[t]
\centering
   \includegraphics[width=0.7\linewidth]{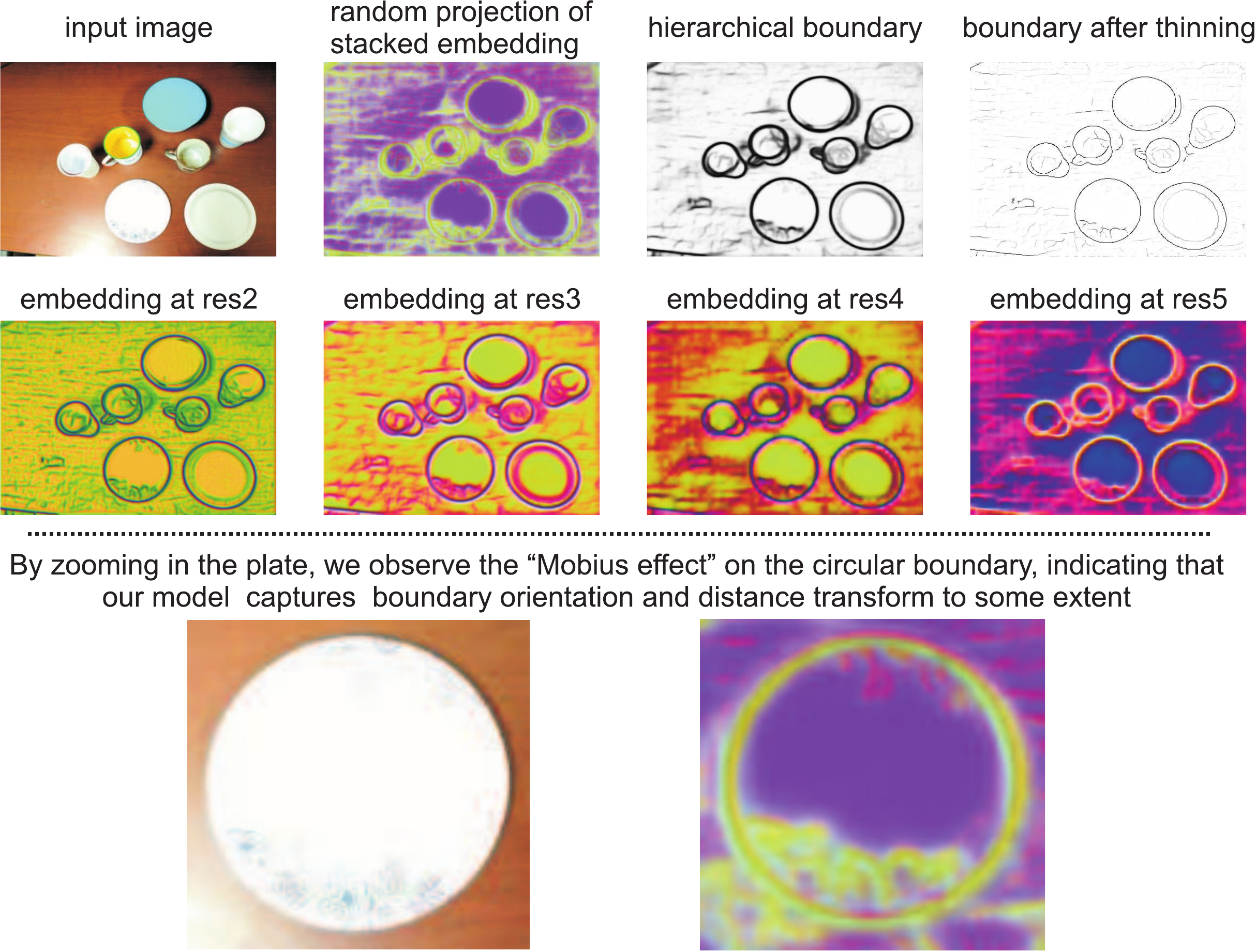}
   \vspace{-3mm}
   \caption{An image highlighting the structure of the embedding for an image
   with circular boundaries. We observe a ``Mobius effect'' where the embedding
   encodes both the orientation and distance to the boundary.}
\label{fig:demo_test_boundary}
\end{figure*}

\begin{figure*}[t]
\centering
   \includegraphics[width=0.7\linewidth]{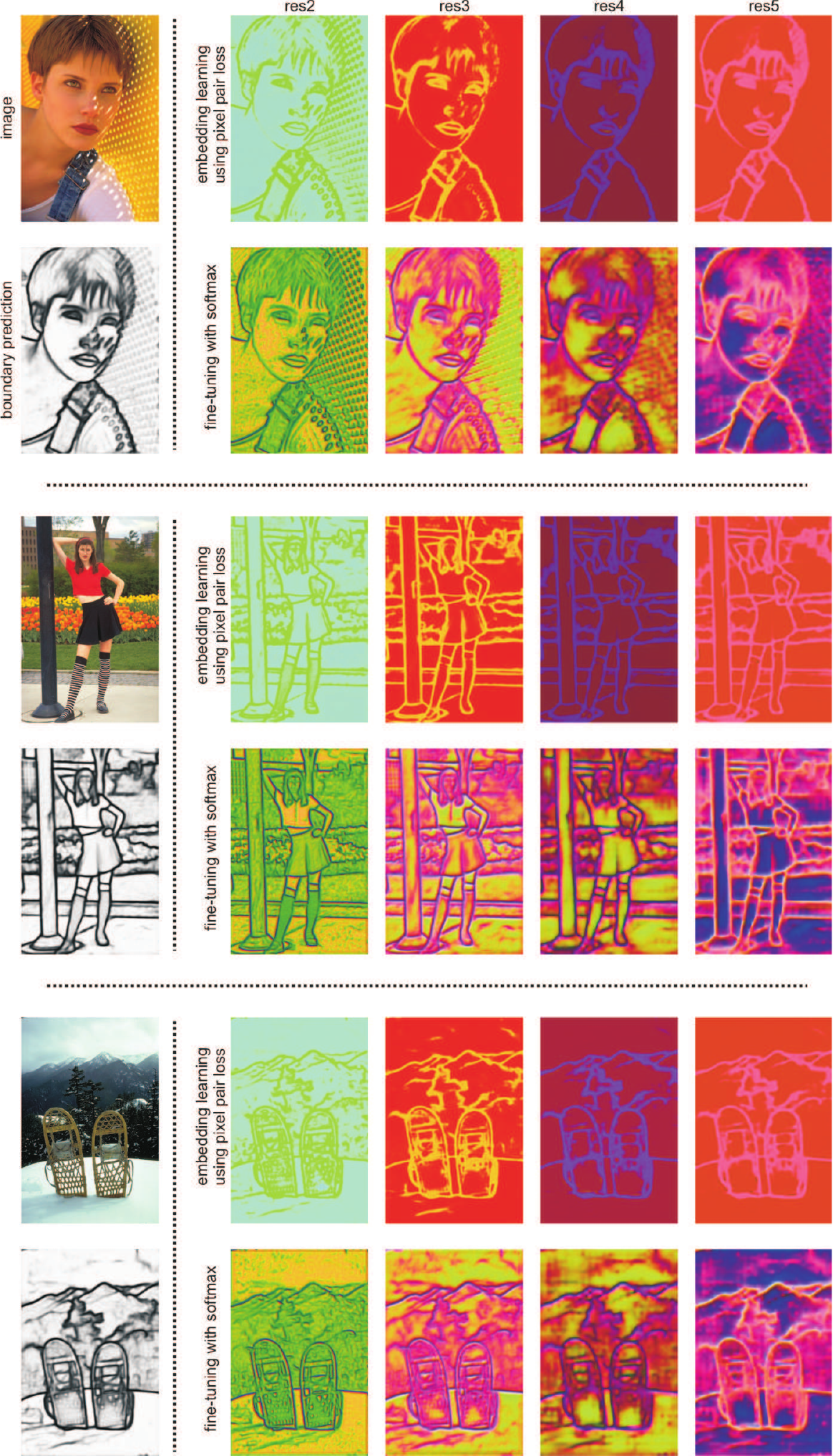}
   \vspace{-3mm}
   \caption{Visualization for boundary detection (part-$1/5$).
   Images are randomly selected from BSDS500 test set.  For each image, we show
   the embedding vectors  at different layers from the model before and after
   fine-tuning using logistic loss.  We can see that the boundary embedding
   vectors after fine-tuning not only highlights the boundary pixels, but also
   captures to some extent the edge orientation and distance from the colors
   conveyed.
   }
\label{fig:boundary_show_appendix_part1}
\end{figure*}


\begin{figure*}[t]
\centering
   \includegraphics[width=0.7\linewidth]{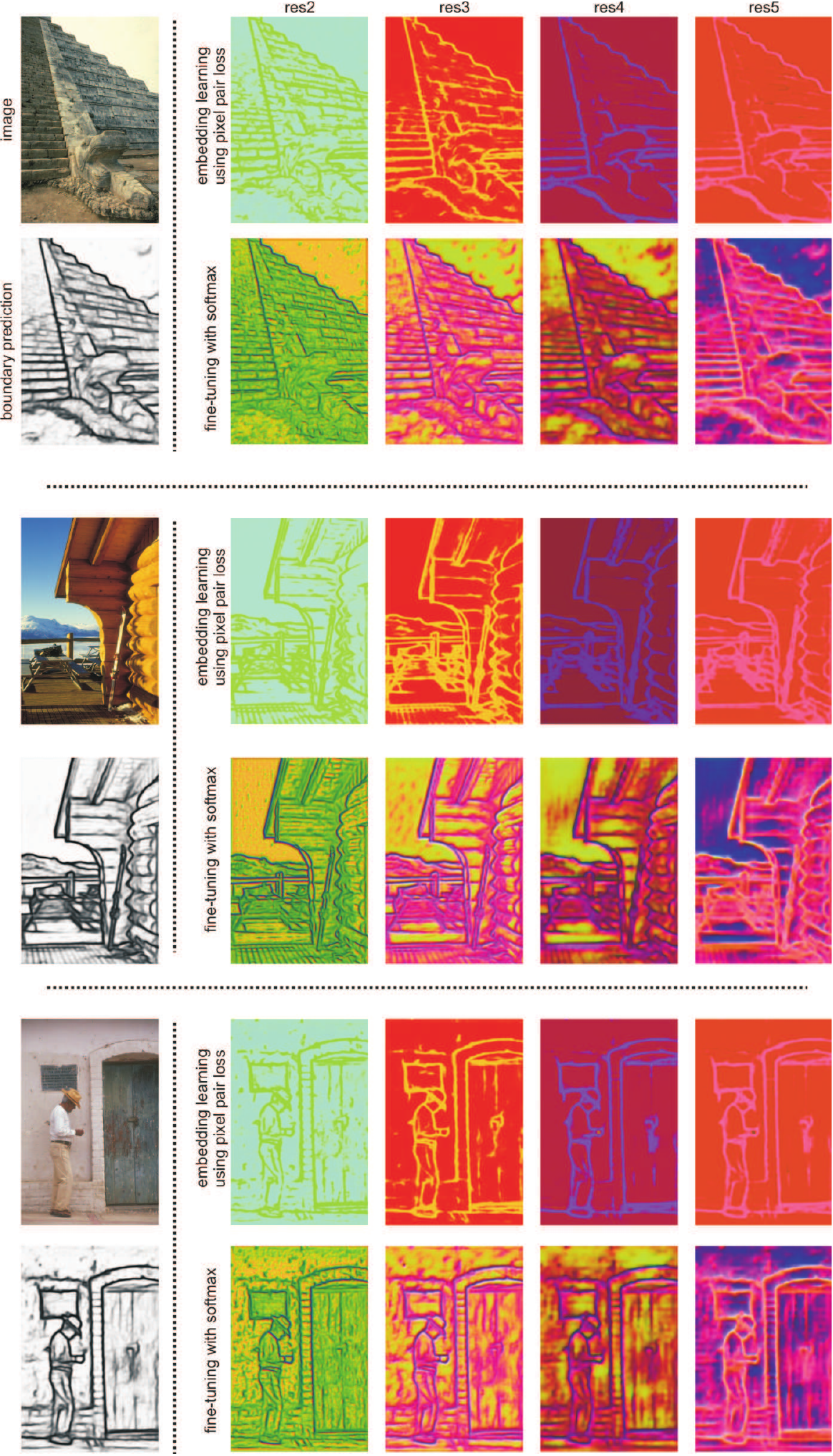}
   \vspace{-3mm}
   \caption{Visualization for boundary detection ($3/5$).
   Images are randomly selected from BSDS500 test set.  For each image, we show
   the embedding vectors  at different layers from the model before and after
   fine-tuning using logistic loss.  We can see that the boundary embedding
   vectors after fine-tuning not only highlights the boundary pixels, but also
   captures to some extent the edge orientation and distance from the colors
   conveyed.
   }
\label{fig:boundary_show_appendix_part3}
\end{figure*}

\begin{figure*}[t]
\centering
   \includegraphics[width=0.7\linewidth]{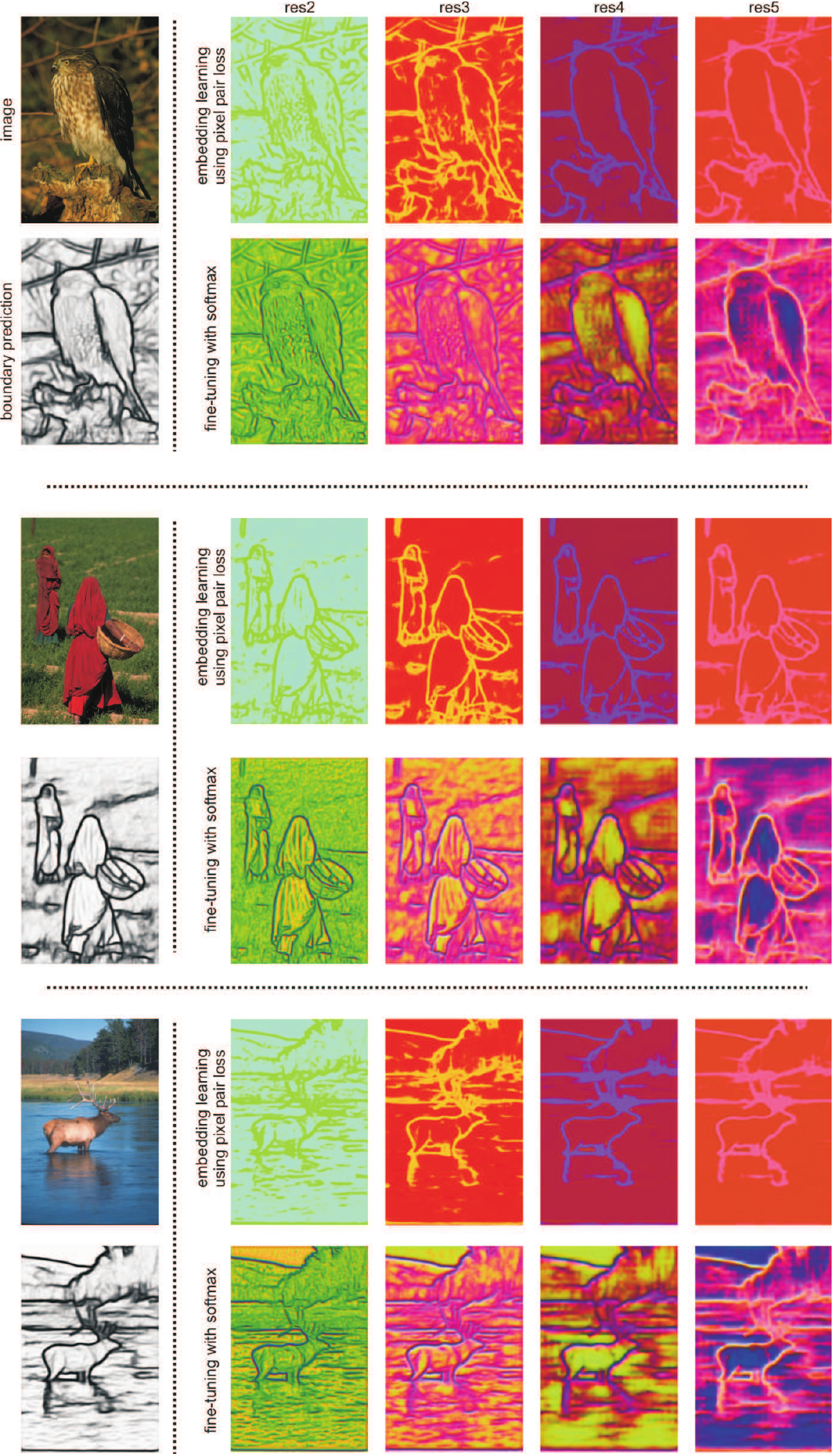}
   \vspace{-3mm}
   \caption{Visualization for boundary detection ($4/5$).
   Images are randomly selected from BSDS500 test set.  For each image, we show
   the embedding vectors  at different layers from the model before and after
   fine-tuning using logistic loss.  We can see that the boundary embedding
   vectors after fine-tuning not only highlights the boundary pixels, but also
   captures to some extent the edge orientation and distance from the colors
   conveyed.
   }
\label{fig:boundary_show_appendix_part4}
\end{figure*}

\begin{figure*}[t]
\centering
   \includegraphics[width=0.7\linewidth]{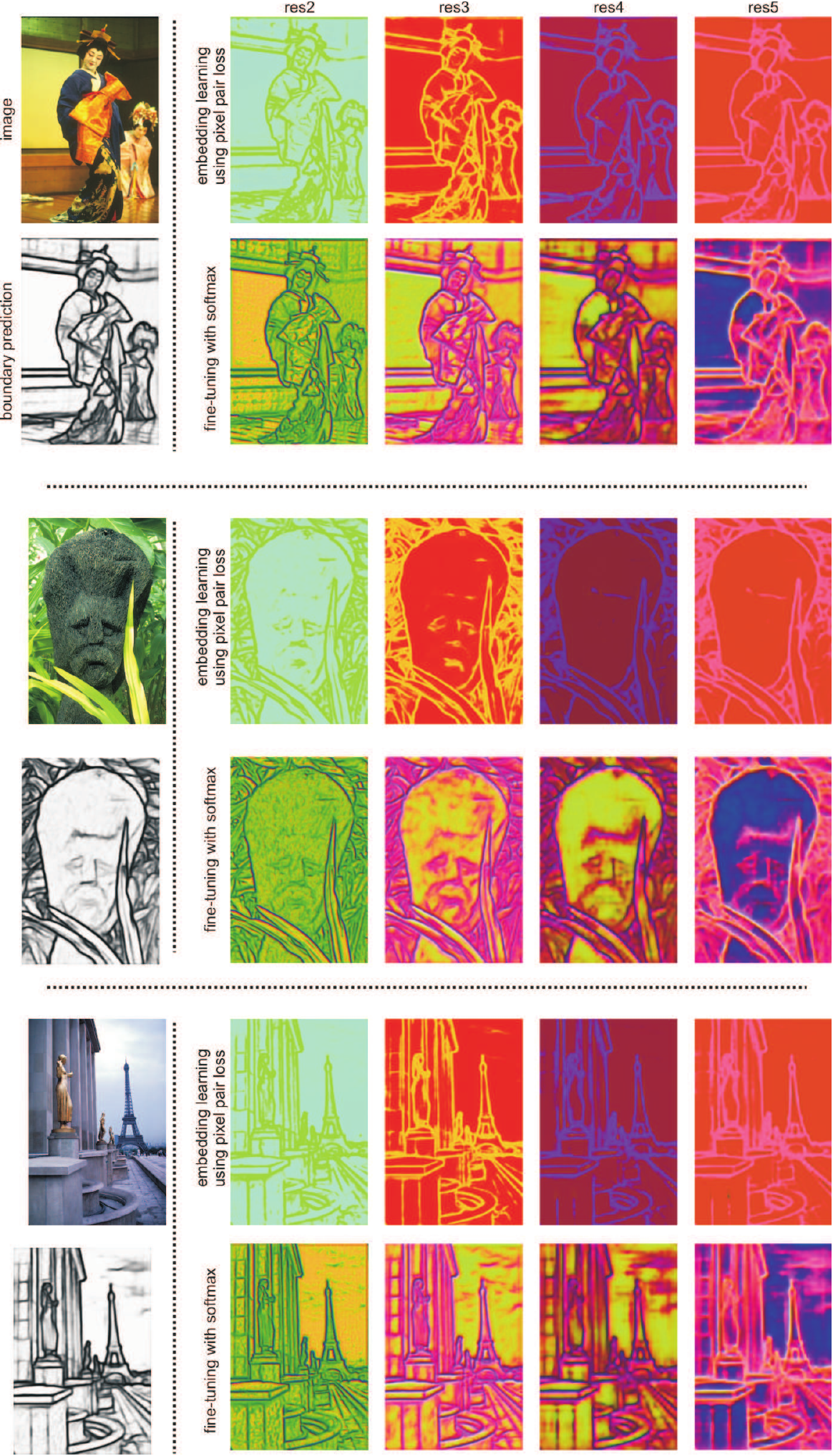}
   \vspace{-3mm}
   \caption{Visualization for boundary detection ($5/5$).
   Images are randomly selected from BSDS500 test set.  For each image, we show
   the embedding vectors  at different layers from the model before and after
   fine-tuning using logistic loss.  We can see that the boundary embedding
   vectors after fine-tuning not only highlights the boundary pixels, but also
   captures to some extent the edge orientation and distance from the colors
   conveyed.
   }
\label{fig:boundary_show_appendix_part5}
\end{figure*}

\section{Additional Results on Instance-Level Semantic Segmentation}

We show more instance-level semantic segmentation results on PASCAL VOC 2012
dataset~\cite{everingham2010pascal} based on our model in Figure
\ref{fig:instSeg_show_part1}, \ref{fig:instSeg_show_part2} and
\ref{fig:instSeg_show_part3}.  
As we learn 64-dimensional embedding
(hyper-sphere) space, to visualize the results, we randomly generate three
matrices to project the embeddings to 3-dimension vectors to be treated as RGB
images.  Besides showing the randomly projected embedding results, we also
visualize the semantic segmentation results used to product instance-level
segmentation.  From these figures, we observe the embedding for background
pixels are consistent, as the backgrounds have almost the same color.
Moreover, we can see the embeddings (e.g. in
Figure~\ref{fig:instSeg_show_part1}, the horses in row-7 and row-13, and the
motorbike in row-14) are able to connect the disconnected regions belonging to
the same instance.  Dealing with disconnected regions of one instance is an
unsolved problem for many methods, e.g. \cite{bai2016deep,
kirillov2016instancecut}, yet our approach has no problem with this situation.

\begin{figure*}[t]
\centering
   \includegraphics[width=0.95\linewidth]{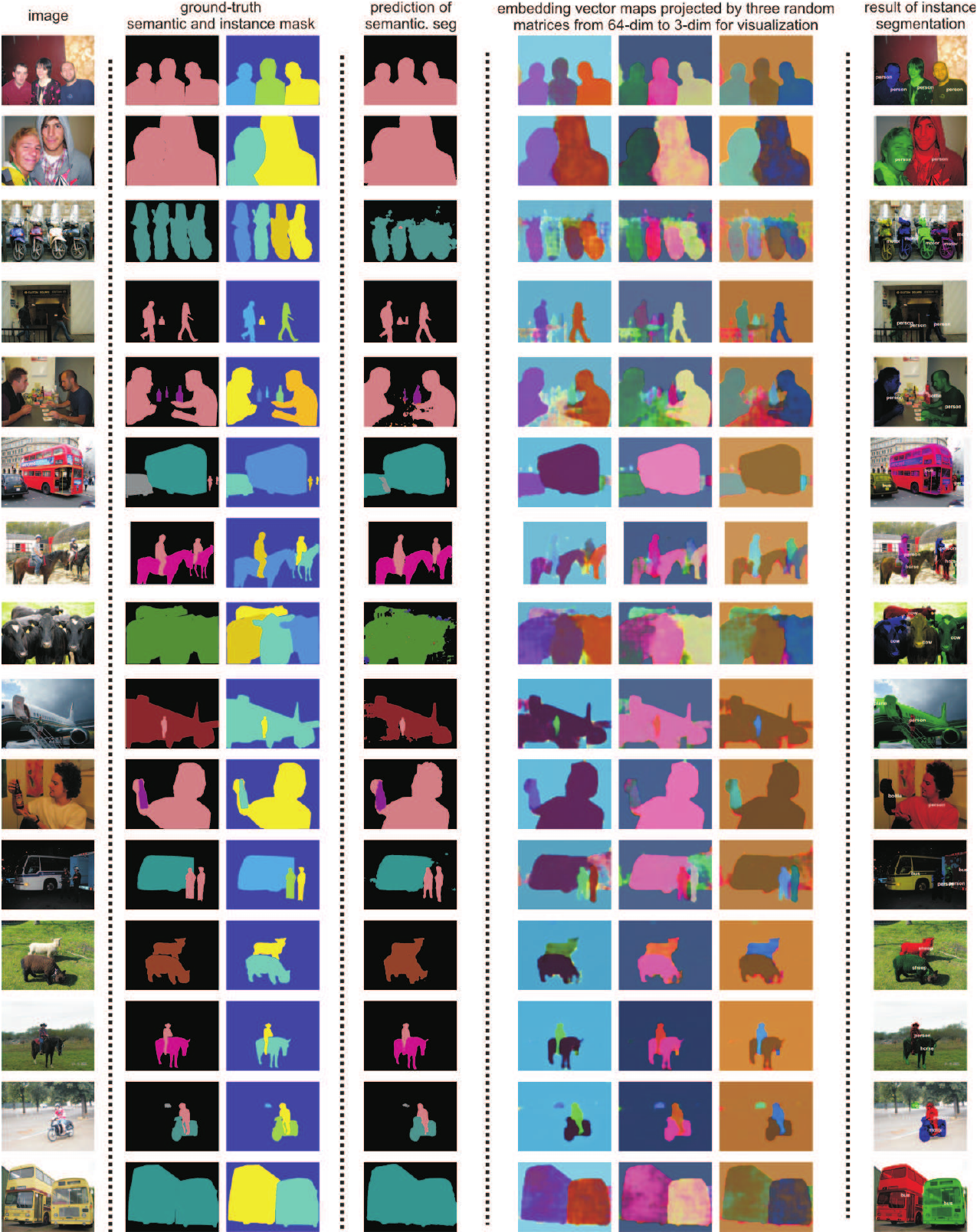}
   \caption{Visualization of generic and instance-level semantic segmentation
   with random projection of the embedding vectors (part-$1/3$).
   }
\label{fig:instSeg_show_part1}
\end{figure*}

\begin{figure*}[t]
\centering
   \includegraphics[width=0.95\linewidth]{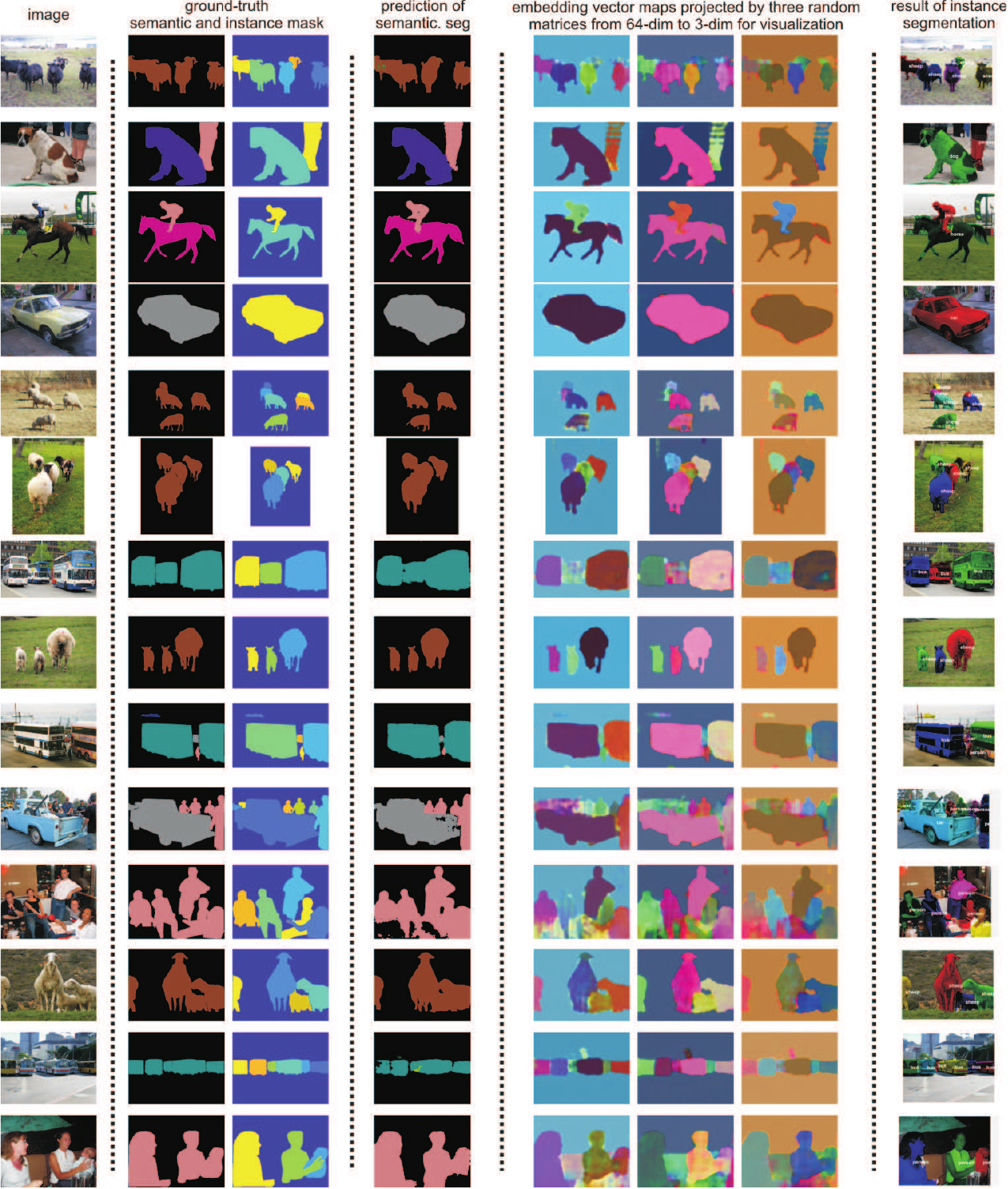}
   \caption{Visualization of generic and instance-level semantic segmentation
   with random projection of the embedding vectors (part-$2/3$).
   }
\label{fig:instSeg_show_part2}
\end{figure*}

\begin{figure*}[t]
\centering
   \includegraphics[width=0.95\linewidth]{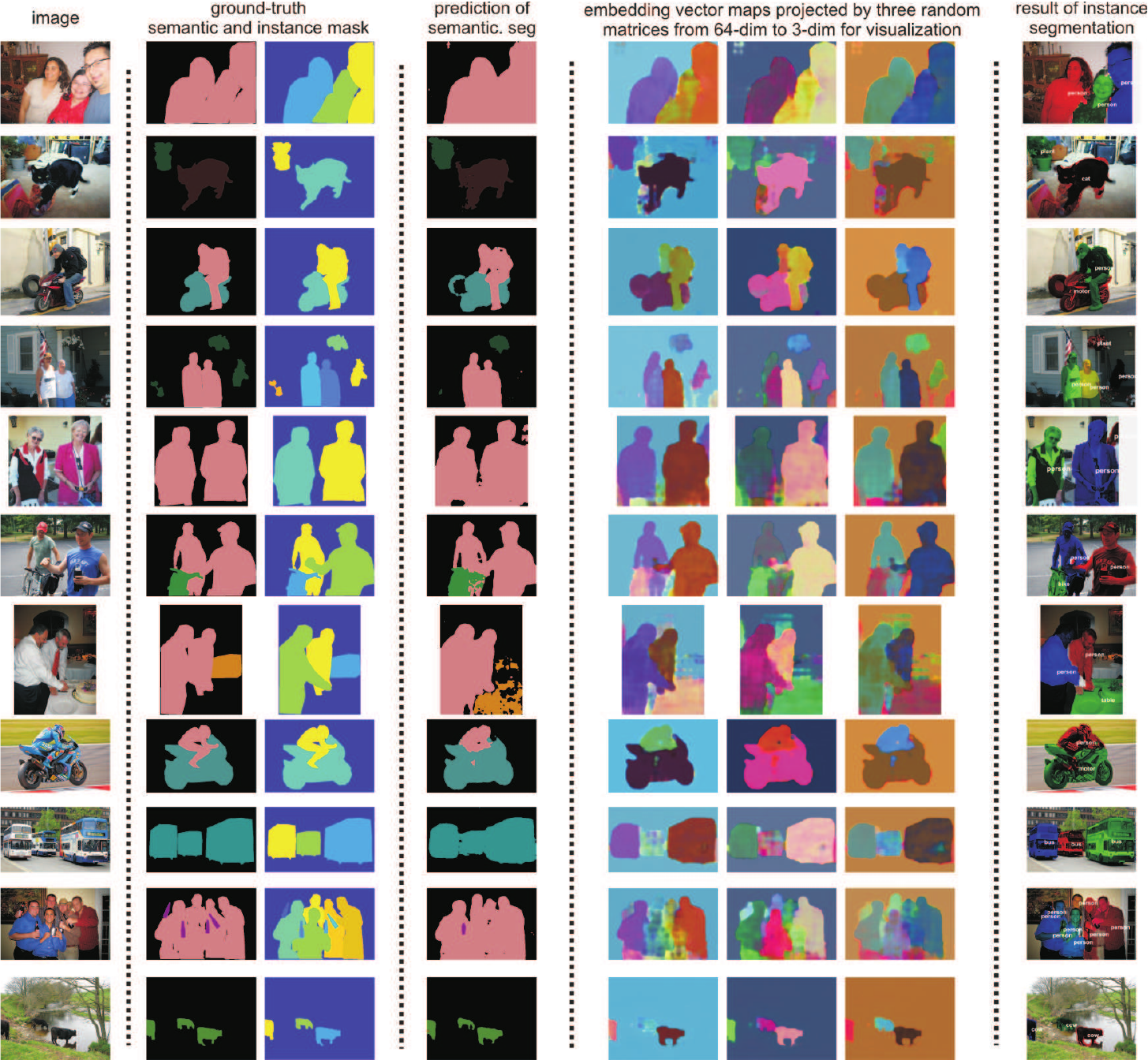}
   \caption{Visualization of generic and instance-level semantic segmentation
   with random projection of the embedding vectors (part-$3/3$).
   }
\label{fig:instSeg_show_part3}
\end{figure*}

\end{document}